\theoremstyle{plain}
\newtheorem{theorem}{Theorem}[section]
\newtheorem{lemma}[theorem]{Lemma}
\theoremstyle{definition}
\newtheorem{definition}[theorem]{Definition}
\theoremstyle{remark}
\newtheorem*{rep@theorem}{\rep@title}
\newcommand{\newreptheorem}[2]{%
\newenvironment{rep#1}[1]{%
 \def\rep@title{#2 \ref{##1}}%
 \begin{rep@theorem}}%
 {\end{rep@theorem}}}
\newcommand{\changes}[1]{{\color{black}{#1}}}
\DeclareMathOperator{\sign}{sign}
\begin{document}

\title{Pruning at Initialization -- A Sketching Perspective}

\author{Noga Bar,  Raja Giryes \IEEEmembership{Senior Member, IEEE}%
\IEEEcompsocitemizethanks{%
\IEEEcompsocthanksitem Noga Bar and Raja Giryes are affiliated with Tel Aviv University.\\
}%
}

\markboth{Pruning at Initialization-A Sketching Perspective}%
{}

\IEEEtitleabstractindextext{%
\begin{abstract}
The lottery ticket hypothesis (LTH) has increased attention to pruning neural networks at initialization.
We study this problem in the linear setting. We show that finding a sparse mask at initialization is equivalent to the sketching problem introduced for efficient matrix multiplication.
This gives us tools to analyze the LTH problem and gain insights into it.
Specifically, using the mask found at initialization, we bound the approximation error of the pruned linear model at the end of training.
We theoretically justify previous empirical evidence that the search for sparse networks may be data independent.
By using the sketching perspective, we suggest a generic improvement to existing algorithms for pruning at initialization, which we show to be beneficial in the data-independent case.  
\end{abstract}

\begin{IEEEkeywords}
Sketching, Pruning at Initialization, Unsupervised learning, Deep Neural Networks, Machine Learning
\end{IEEEkeywords}}

\maketitle

\IEEEdisplaynontitleabstractindextext

\IEEEpeerreviewmaketitle

\section{Introduction}
\IEEEPARstart{P}{runing} a neural network at initialization, where weights are removed ahead of training, with minimal harm to network performance can be beneficial for training efficiency. It can also be used to gain insights into neural network training and expressivity as a whole.
According to the lottery ticket hypothesis (LTH) \cite{DBLP:conf/iclr/FrankleC19}, a network may contain extremely sparse subnetworks at initialization that achieve comparable or even better performance when trained in isolation.
The original algorithm suggested for finding the winning ticket was inefficient and required multiple trainings until convergence. Others, however, suggested pruning at initialization in a more efficient manner \cite{lee2018snip,tanaka2020pruning,wang2020picking,de2020progressive,alizadeh2022prospect,Lee2020A,sreenivasan2022rare}. 

Most works propose scoring functions for finding a subnetwork at initialization that rely on the specific data and task at hand \cite{lee2018snip,wang2020picking,de2020progressive,alizadeh2022prospect}.
Yet, evidence suggests that the winning lottery ticket is independent of data. 
Specifically, it has been shown that the winning ticket can be transferred between datasets and tasks \changes{\cite{evci2020rigging, Morcos19OneTicket,Chen_2021_CVPR,sabatelli2020transferability,you2022supertickets}}.
Moreover, previous work has shown that pruning methods can have good performance with corrupted training data \cite{su2020sanity}.
Furthermore, the work of SynFlow has demonstrated that a network can even be pruned at initialization without the use of data and a task-specific loss \cite{tanaka2020pruning}.

In this work, our aim is to explain the success of \changes{data-free} LTH . 
Previous theoretical analysis focused on a stronger version of the LTH \cite{ramanujan2020s}.
According to the study, deep neural networks (DNNs) have a sparse subnetwork capable of good performance even without training in a supervised setting.
Specifically, they show that for a given DNN architecture, initialization and target data (with labels), there is a subnetwork that achieves high accuracy without the need to train.
The theoretical explanation for the strong LTH is developed by estimating the function produced by the large and dense network using only the sparse subnetworks in it \cite{malach2020proving,pensia2020optimal,da2021proving}.
This result was further generalized and it has been shown that the initialization can be compatible with a set of functions over the training set \cite{burkholz2021existence}.
While the above results provide an intriguing explanation for the success of LTH in the supervised case, their explanation for the existence of the mask assumes the data is available for performing the pruning and that no training is required.  
Our study, on the other hand, focuses on a setting where the pruning is performed using random data and applied to other parameters than the ones at initialization (possibly after training).

For our analysis, we draw a connection between pruning at initialization and a well known sketching algorithm \cite{drineas2006fast}.
Originally, the algorithm was suggested for efficient matrix multiplication while minimizing the error of the multiplication approximation.
We choose to focus on the linear case and analyse it in order to gain insights into the general case, which is a common practice when analyzing neural networks \cite{arora2018optimization,brutzkus2018sgd,shamir19Exponential,arora2019implicit,li2020towards,arora2018a,Nacson22Implicit,yun2021a}. 

Our key insight is that in the linear case, sketching corresponds to pruning at initialization. 
Using this relation, we extend the sketching analysis to the approximation error at initialization with an unknown vector. 
In our case, this vector can be viewed as the learned vector at the end of training.

We focus on the link between sketching and pruning without data.
This allows analyzing the performance of data-free pruning methods. Specifically, 
we develop a bound that shows that the pruning error without data depends on the ratio between the weights of the linear network at initialization and at the end of training.
The differences between the parameters are small in practical NNs and especially in the Neural Tanget Kernel (NTK) regime \cite{jacot2018neural}.

Equipped with the theoretical results, we turn to study practical algorithms for the problem of pruning at initialization without data. 
We consider the state-of-the-art and unsupervised SynFlow method and show that in the linear case it bears great similarity to sketching when it is applied with a random vector. This provides us with a possible explanation for SynFlow's success. 
We also analyse the connection of the supervised SNIP method to sketching and use it to suggest a version of SNIP without data. 

The relations we draw also suggest that successful pruning is highly correlated with selecting weights that have large magnitude at initialization. 
We empirically validate our findings for both SynFlow and the iterative magnitude pruning (IMP) algorithm (the method suggested in the original LTH work) showing that they both have the tendency to maintain large magnitude weights after pruning.

To further validate our analysis with random data, we show that various pruning algorithms that use the input data \cite{DBLP:conf/iclr/FrankleC19,wang2020picking,lee2018snip} only mildly degrade when the input is replaced with completely random data. It is consistent with previous evidence that pruning methods do not fully rely on data \cite{su2020sanity}.

Based on the above, we suggest a general improvement to pruning algorithms in the \changes{when the data is unavailable}.
Instead of pruning weights by removing the lowest scores, sketching masks are randomized based on some probability. We test the effect of replacing the strict threshold criterion of pruning with a randomized one in which the mask is sampled based on the pruning method scoring. This strategy shows improvement in most cases for various network architectures and datasets, namely, CIFAR-10, CIFAR-100 \cite{krizhevsky2009learning}, Tiny-ImageNet \cite{wu2017tiny} \changes{and ImageNet \cite{deng2009imagenet}}.

Our main contributions are (i) connecting pruning and sketching; (ii) providing error bound for \changes{data-free pruning} at initialization in the linear case;  (iii) relating SynFlow and SNIP to sketching \changes{and enhancing them in the absence of data}; (iv) presenting a general improvement \changes{for pruning without data}.

\section{Related Work}

The main DNN pruning approach follows the train$\rightarrow$prune$\rightarrow$fine-tune pipeline \changes{\cite{mozer1988skeletonization,molchanov2016pruning,lecun1989optimal,hassibi1993optimal,han2015learning,guo2016dynamic,bartoldson2020generalization}}. It requires training until convergence, then pruning and finally fine tuning. This saves computations at inference time.
Another approach sparsifies the model during training where training and pruning are performed simultaneously \changes{\cite{chauvin1988back,carreira2018learning,louizoslearning,bellecdeep,mocanu2018scalable,mostafa2019parameter,back2023magnitude,you2022supertickets}}. In this case, the fine tuning stage is omitted.
The main gain of these methods is also at inference time.
Lastly, pruning at initialization, which is our focus, aims to zero parameters at initialization. 
Such methods improve efficiency in parameters for both training and inference \cite{evci2020rigging}. Additionally, it can be used for neural architecture search \changes{\cite{mellor2021neural,white2021powerful,abdelfattah2021zero,you2022supertickets}} and for gaining a deeper theoretical understanding of DNN \cite{arora2018stronger}.

In this work, we discuss pruning by simply zeroing weights (unstructured pruning). Yet, another approach is to remove complete neurons (structured pruning) 
\changes{\cite{novikov2015tensorizing,jaderbergspeeding,chenotov2,chen2021only,neill2020overview,rachwan2022winning,li2022revisiting,bartoldson2020generalization,ganjdanesh2022interpretations}}.
These methods generally include more parameters than unstructured ones but are usually more beneficial for computational time on standard hardware.
Unstructured pruning can lead to reduced computations for some hardware while maintaining a low number of parameters \cite{evci2020rigging}.
Note that the concepts presented in our work may be extended to structured pruning as
some method originally presented for unstructured pruning that employ scoring has been extended for structured pruning at initialization \cite{van2020single,rachwan2022winning}.

Due to the exponential search space, pruning DNNs at initialization is a challenging task.
Before LTH, it was suggested to use SNIP which prunes the NN while maintaining its connectivity according to the magnitude of the gradient \cite{lee2018snip}.
It was improved by using it iteratively \cite{de2020progressive} or applying it after a few training steps \cite{alizadeh2022prospect}. It was shown that one may reduce the number of required iteration by using a simpler data, e.g., small fraction of the data \cite{paul2022lottery}.
It was also proposed to improve the signal propagation in the DNN at initialization \cite{Lee2020A} and to find a mask while preserving gradient flow using the Hessian matrix \cite{wang2020picking}. 
Zhang et al. \cite{zhang2021validating} proposed a sparse pruning algorithm for high-dimensional manifolds and a theoretical validation of subnetworks' viability. 
Those methods rely on the gradient of their parameters w.r.t. the input data.
Yet, a study found that some of these hardly exploit information from the training data \cite{su2020sanity} hinting that it may not be required for finding good sparse subnetworks.
Liu et al. \cite{liu2020finding} relaxed the need for supervision and employed a teacher-student model with unlabeled data.

LTH \cite{DBLP:conf/iclr/FrankleC19} increased the interest in sparse neural networks.
The original study suggested an exhaustive and supervised algorithm for finding the winning ticket.
Other works reduce data dependency when searching for the winning ticket.  when searching for the winning ticket. 
It was demonstrated that looking for tickets at early stages of training, prior to convergence, leads to improved performance and saves computational overhead \cite{you2019drawing,hubens2021one,shen2022prune}.
It was further shown that training the LT with partial datasets leads to decent winning tickets \cite{zhang2021efficient}.
Early pruning has been theoretically proven to be beneficial \cite{wolfe2021provably}.
Together with other pruning methods at initialization, it was shown that the winning tickets do not rely heavily on the training data \cite{su2020sanity}. Even so, Frankle et al. \cite{frankle2020linear} have shown that unsupervised pruning mainly finds the support of the weights and may be inferior to the supervised case.
Our study complements these works by drawing a relationship to sketching that provides bounds for data-free pruning.
It also provides an adaptation of supervised methods to reduce the requirement of input data and improve their pruning strategy.

Other works demonstrated the generality of the winning ticket and showed that a single pruned network can be transferred across datasets and achieve good performance after fine-tuning \cite{evci2020rigging, Morcos19OneTicket,Chen_2021_CVPR,you2022supertickets} and even that LT can be used for non-natural datasets \cite{sabatelli2020transferability}. Universal winning tickets that fit multiple functions were shown theoretically in \cite{burkholz2021existence}.
LTH was strengthened and it was suggested that a sparse subnetwork within the neural network initialization has high accuracy without training \cite{ramanujan2020s}.

The strong LTH was later theoretically studied \changes{\cite{malach2020proving,pensia2020optimal,orseau2020logarithmic,fischer2021towards,fischer2022plant,da2021proving}}.
Note that strong LTH requires a larger network before pruning. This was relaxed by initializing the weights iteratively \cite{chijiwa2021pruning}.
These works only bound the approximation error at initialization and assume labeled data.

SynFlow \cite{tanaka2020pruning} is not data or task dependent.
We examine its equivalence to a sketching method when the input data to SynFlow is random rather than a ones vector.
The works in \cite{patil2021phew,gebhart2021unified,pham2024towards} examined active paths in networks at initialization and established other unsupervised pruning methods.
Recently, it was suggested to prune by approximating the spectrum of a neural tangent kernel in a data agnostic way \cite{wang2022ntk}.

We establish equivalence between the sketching problem and pruning.
To this end, we use a well known Monte-Carlo technique for efficient approximation of matrix multiplication and compression designed for handling large matrices \cite{drineas2006fast}.
We analyze network pruning methods through the `sketching lens', which leads to an approximation error bound of the mask at initialization. The type of result is similar to the bounds of approximation for the strong LTH but using a different proof technique that analyze the ``weak'' case without data.

\section{Sketching and Pruning at Initialization}
\label{sec:SketchPrune}

{\bf Notations.}
The input data we aim to model is $x_i\in \mathbb{R}^d$ for $i=1,..., n$ with the corresponding labels $y_i\in\mathbb{R}$.
We relate to the input as a matrix $X\in\mathbb{R}^{d\times n}$ where $x_i$ are its columns and $y\in\mathbb{R}^n$ is the vector containing the labels.
The $i$th row in a matrix $A$ is denoted as $A_{(i)}$ and the $i$th column as $A^{(i)}$.
Unless otherwise stated, $\norm{x}$ is the Euclidean norm of $x$. 

{\bf Problem statement.} In this work, we relate to linear features where we aim to find a vector that approximates the data, i.e. $w$ such that $X^T w \simeq y$.
Our main interest is to sparsify $w$ while minimizing the mean squared error of the features. The optimization problem can be written as
\begin{align}\label{eq:prob_stat}
    \min_{m, s.t. \norm{m}_0 \leq s} \norm{X^Tw - X^T(w \odot m)},
\end{align}
where $\norm{\cdot}_0$ is the number of non-zeros in $m$ and $\odot$ stands for entry-wise multiplication. Clearly, when the 'weighted mask' $w$ is applied to a vector, it holds that $\norm{w \odot m}_0 \leq s$.

{\bf Connecting sketching and pruning at initialization.}
We focus on a sketching approach presented initially for efficient matrix multiplication \cite{drineas2006fast}. 
Its goal is multiplying only a small subset of columns and rows, while harming the approximation accuracy as little as possible.
The main contribution of the approach is in the way this subset is chosen.
We adapt it to match the linear features settings, where the matrix multiplication is reduced to be a multiplication of given input $X\in\mathbb{R}^{d\times n}$ and a vector $w\in\mathbb{R}^d$. This results in the linear features $X^Tw$.

\begin{figure}
\vspace{-0.15in}
    \begin{algorithm}[H]
      \caption{Sketching for mask \cite{drineas2006fast}.}
      \begin{algorithmic}\label{algo:sketch_mask}
        \STATE \textbf{Input:} Probability $p\in\mathbb{R}^d$ and density $s\in\mathbb{Z}^+$.
        \STATE \textbf{Initialization} Set mask  $m=0$.
        \FOR{$t=1,...,s$}
          \STATE $i_t \sim p$ 
          \STATE $m_{i_t} = m_{i_t} + \frac{1}{s p_{i_t}}$
        \ENDFOR
        \STATE \textbf{Output:} The mask $m$.
      \end{algorithmic}
    \end{algorithm}
  \vspace{-0.4in}
\end{figure}

Note that choosing rows/columns in matrix multiplication is equivalent to choosing a mask for entries in $w$ for matrix-vector multiplication.
Given a choice of the entries, the entries that are compatible with the mask are non-zero while others are zero.
The zero entries in $m$ leads to ignoring whole columns in the matrix. For example, the approximation of the multiplication of a matrix $A$ with the vector $b$ masked with $m$ satisfies $Ab \simeq A(b\odot m) = A^{(\{i, m_i\ne 0\})} b_{(\{i, m_i\ne 0\})}$.
This means that any sketching algorithm that compresses matrix multiplication by selecting rows/columns can also be used as a masking procedure for a vector in matrix-vector multiplication.

In order to find a mask $m$ for the vector $w$ as described in the problem statement (\cref{eq:prob_stat}), we use the sketching algorithm presented in \cref{algo:sketch_mask}.
The algorithm samples an entry randomly according to the distribution $p$ and sets this entry to be non-zero until the desired density is achieved.
Note that the indices $i_t$ are sampled i.i.d. with return and that the distribution of $p$ is used both for sampling and scaling.
Due to the possibility of sampling the same entry more than once, the resulting $m$ has granularity of $\norm{m}_0 \leq s$ and it is not necessarily holds that $\norm{m}_0=s$. Additionally, not all non-zero entries in the mask are equal. 

By \cite[Lemma 4]{drineas2006fast}, the optimal probability for minimizing the error is
\begin{align}\label{eq:prob}
    \Pr_{w,X}[i] = \frac{\norm{X_{(i)}} \abs{w_i}}{\sum_{j=1}^d \norm{X_{(j)}} \abs{w_j}}, \quad i=1,...,d.
\end{align}
The probability is proportional to the norms of the rows of $X$, and it corresponds to a specific entry at each example $x_k$ across all inputs, $k=1,...,n$. Recall that $X_{(i)}\in\mathbb{R}^n$.
Thus, masking an entry $i$ in $w$ means ignoring all the $i$th entries in the data $x_k$, $k=1,...,n$.
Note that when sampling with \eqref{eq:prob}, if the inputs are uniformly distributed, then it is more likely to choose weights with larger magnitudes given the data. \cref{fig:norm} shows empirically that DNNs' pruning methods also tend to keep higher magnitude weights.

\begin{figure}[t]
       \vspace{-0.15in}
    \centering
    \begin{subfigure}{0.48\linewidth}
    \centering
    \includegraphics[width=\textwidth]{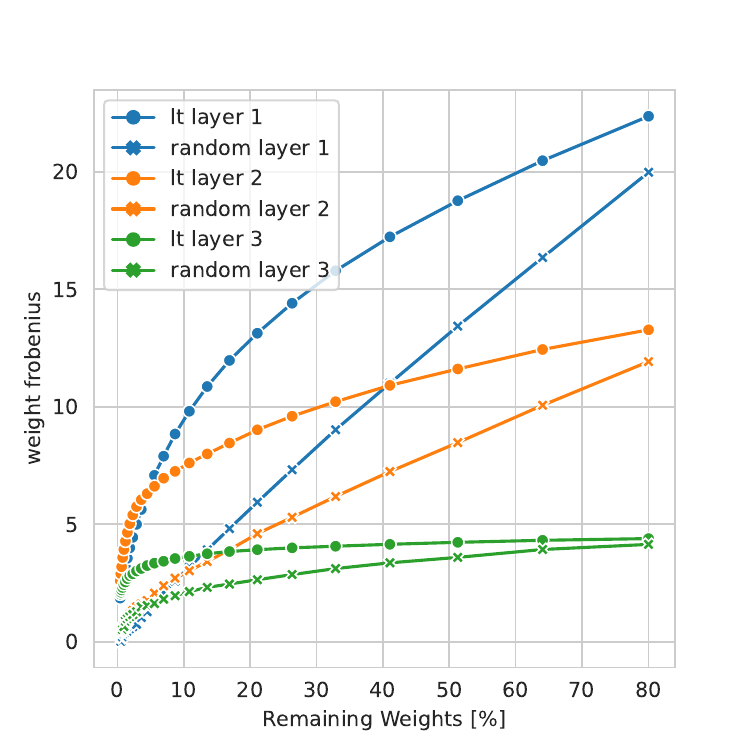}
    \end{subfigure}
    \begin{subfigure}{0.48\linewidth}
    \centering
    \includegraphics[width=\textwidth]{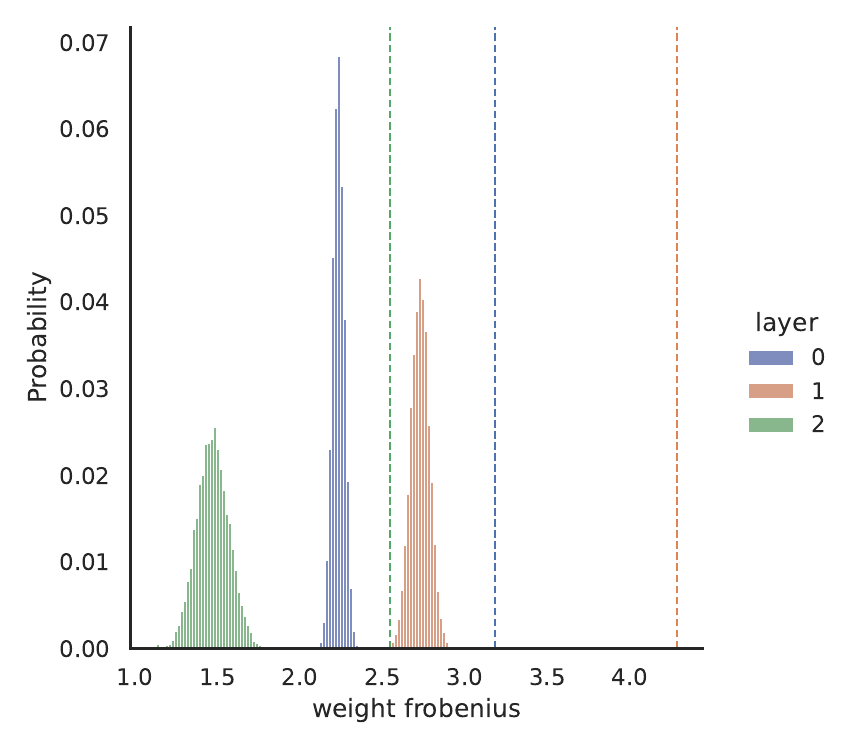}
    \end{subfigure}
        \vspace{-0.12in}
    \caption{Pruning a fully connected NN with Fashion-MNIST: (left) Norms of winning lottery tickets and random masks with multiple sparsities. (right) Winning tickets norms vs. 10,000 random masks with 1.2\% density.
    } \label{fig:norm}
    \vspace{-0.2in}
\end{figure}

\section{Pruning Approximation Error}
\label{sec:approx_error}
Using the relationship between pruning and sketching drawn above, 
we use the analysis tools from sketching for examining the properties of pruning.
All proofs are in the sup. mat.

In pruning at initialization, the initial vector $w_0$ is used for finding the mask, and then we approximate the features with the same mask and another unknown vector $w^\star$. Typically, $w^\star$ is the learned weights at the end of training.
The features are $X^T w^\star$ and we aim to minimize $\norm{X^T w^\star - X^T (w^\star \odot m)}$.
Note that since $m$ is chosen randomly we bound the expected value of the approximation error.
For simplicity, we denote $p^0_{i} = \Pr_{w^0,X}[i]$, having $p^0$ as the optimal probability to sample the mask at initialization.
In our analysis, $X$ may be given or randomly distributed while the weighting $w$ is assumed to be known.

First, we present a simplified version of the one found in the original sketching paper \cite{drineas2006fast} assuming the data $X$ are given. We rephrase the lemma to match the vector-matrix multiplication case.

\begin{lemma}{\cite[Lemma 4]{drineas2006fast}}\label[lemma]{lemma:sketching}
Suppose $X\in\mathbb{R}^{d\times n}$, $w^0\in\mathbb{R}^d$ and  $s\in \mathbb{Z}^+$ then using \cref{algo:sketch_mask} with $p^0$ for $m$ then the error is
\begin{align*}
    & \mathbb{E}_{m|X}\left[\norm{X^T w^0 - X^T (w^0 \odot m)}^2\right] 
    \\
    & ~~~~~~~~~~~~~~~~~~~~~~~~~~= \frac{1}{s} \left(\sum_{k=1}^d \norm{X_{(k)}} \abs{w_k^0}\right)^2 - \frac{1}{s}\norm{X^T w^0}^2.
\end{align*}
\end{lemma}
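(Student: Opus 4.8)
The plan is to recast the output of \cref{algo:sketch_mask} as an empirical average of i.i.d.\ random vectors and then apply a bias--variance decomposition. For each draw $t=1,\dots,s$, with $i_t\sim p^0$, define the random vector $Z_t = \frac{w_{i_t}^0}{p^0_{i_t}} X_{(i_t)} \in \mathbb{R}^n$. Grouping the $s$ draws by the index they select (if index $i$ is chosen $c_i$ times then $m_i = c_i/(s\,p^0_i)$), one checks directly that $X^T(w^0\odot m) = \frac{1}{s}\sum_{t=1}^s Z_t$. This structural identity is the crux: it turns the masked product into a Monte-Carlo estimator whose summands are independent and identically distributed, since the indices $i_t$ are sampled i.i.d.\ with replacement.

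Next I would verify that the estimator is unbiased. Since $\mathbb{E}[Z_t] = \sum_{i=1}^d p^0_i \cdot \frac{w_i^0}{p^0_i} X_{(i)} = \sum_{i=1}^d w_i^0 X_{(i)} = X^T w^0$, the target $X^T w^0$ is exactly the mean of each $Z_t$. Writing $Y = X^T w^0$ and expanding $\norm{\frac{1}{s}\sum_t (Z_t - Y)}^2$, the i.i.d.\ structure together with $\mathbb{E}[Z_t - Y]=0$ forces all cross terms to vanish, leaving $\mathbb{E}_{m|X}\big[\norm{X^T w^0 - X^T(w^0\odot m)}^2\big] = \frac{1}{s}\big(\mathbb{E}\norm{Z_1}^2 - \norm{Y}^2\big)$.

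It then remains to evaluate the second moment of a single draw: $\mathbb{E}\norm{Z_1}^2 = \sum_{i=1}^d p^0_i \, \frac{(w_i^0)^2}{(p^0_i)^2}\norm{X_{(i)}}^2 = \sum_{i=1}^d \frac{(w_i^0)^2 \norm{X_{(i)}}^2}{p^0_i}$. Substituting the optimal probability $p^0_i = \norm{X_{(i)}}\abs{w_i^0}\big/\big(\sum_j \norm{X_{(j)}}\abs{w_j^0}\big)$ from \cref{eq:prob}, each summand collapses to $\abs{w_i^0}\,\norm{X_{(i)}}\cdot\big(\sum_j \norm{X_{(j)}}\abs{w_j^0}\big)$, and summing over $i$ produces $\big(\sum_k \norm{X_{(k)}}\abs{w_k^0}\big)^2$. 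Plugging this back yields the claimed expression, with the final term appearing as $\frac{1}{s}\norm{X^T w^0}^2$.

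I do not expect a serious obstacle here: once the i.i.d.\ representation is in place, the remainder is a routine variance computation. The only points requiring care are the sampling-with-replacement bookkeeping (correctly accounting for repeated indices so that the scaling $1/(s\,p^0_i)$ in the mask matches the empirical-average form), and confirming that the specific choice $p^0$ is precisely what makes the inverse-probability weights telescope the second moment into a clean square. A minor point is that for dimensional consistency the last term in the statement should read $\norm{X^T w^0}^2$ rather than $\norm{X^T w^0}$.
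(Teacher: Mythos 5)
Your proof is correct and follows essentially the same route as the paper, which does not reprove this lemma directly but carries out the identical unbiasedness-plus-variance computation coordinate-wise (via its auxiliary expectation and variance lemmas and the random variables $Y_t^{0\star}$) rather than through your vectorized $Z_t$ formulation. You are also right that the final term should be $\frac{1}{s}\norm{X^T w^0}^2$; the missing square in the statement is a typo, as the squared form is what appears in the paper's own downstream calculations.
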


The lemma bounds the error of sparse approximation at initialization given the data. It is proportional to $\frac{1}{s}$, which means that as the mask density grows the error decreases, as expected.
This connection will be consistent throughout our analysis.
Note that the above result holds for post-training pruning when the mask is found according to 
 $w^\star$ which are the weights after training instead of the initial weighting.
We extend the result to the case of finding the mask according to the initialization $w^0$ and apply it on $w^\star$.
Next, we establish a bound of the error when $X$ is drawn from a normal i.i.d. distribution, where we also observe this behavior.
\begin{lemma} \label[lemma]{lemma:sketching_rand_data}
Suppose $X\in\mathbb{R}^{d\times n} \sim \frac{1}{\sqrt{n}} \mathcal{N}(0, I)$, $w^0\in\mathbb{R}^d$ and  $s\in \mathbb{Z}^+$ then when using \cref{algo:sketch_mask} with $p^0$ for drawing $m$,  the error can be bounded as follows
\begin{align*}
    \mathbb{E}_{X}\left[\norm{X^T w^0 - X^T (w^0 \odot m)}^2\right] \leq \frac{1}{s} \norm{w^0}^2.
\end{align*}

\end{lemma}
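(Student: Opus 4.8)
The plan is to condition on $X$, apply the already established conditional identity of \cref{lemma:sketching}, and then average over the Gaussian draw of $X$ using the tower rule. For a fixed $X$, the mask $m$ produced by \cref{algo:sketch_mask} run with the optimal probabilities $p^0$ of \cref{eq:prob} satisfies, by \cref{lemma:sketching},
\begin{align*}
\mathbb{E}_{m\mid X}\!\left[\norm{X^T w^0 - X^T(w^0\odot m)}^2\right] = \frac{1}{s}\left(\sum_{k=1}^d \norm{X_{(k)}}\abs{w_k^0}\right)^2 - \frac{1}{s}\norm{X^T w^0}^2 .
\end{align*}
Taking $\mathbb{E}_X$ of both sides and using $\mathbb{E}_X[\cdot]=\mathbb{E}_X[\mathbb{E}_{m\mid X}[\cdot]]$ reduces the whole problem to two Gaussian moment computations.

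First I would record the relevant moments of the rows $X_{(i)}\in\mathbb{R}^n$, whose entries are i.i.d.\ $\frac{1}{\sqrt n}\mathcal{N}(0,1)$. Summing variances gives $\mathbb{E}_X\norm{X_{(i)}}^2=\sum_{j=1}^n\mathbb{E}_X[X_{ij}^2]=1$, and by Jensen $\mathbb{E}_X\norm{X_{(i)}}\le 1$; moreover distinct rows are independent. For the subtracted term, expanding $\norm{X^T w^0}^2=\sum_{k,l}w_k^0 w_l^0\langle X_{(k)},X_{(l)}\rangle$ and using $\mathbb{E}_X\langle X_{(k)},X_{(l)}\rangle=\delta_{kl}$ yields $\mathbb{E}_X\norm{X^T w^0}^2=\norm{w^0}^2$, which is already the target right-hand side.

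The decisive step is to expand the expectation of the first term. Writing out the square, its diagonal contributes $\sum_k\mathbb{E}_X\norm{X_{(k)}}^2(w_k^0)^2=\norm{w^0}^2$, which cancels exactly against the subtracted $\mathbb{E}_X\norm{X^T w^0}^2=\norm{w^0}^2$. Consequently the expected error equals $\frac1s$ times the off-diagonal cross-term mass,
\begin{align*}
\mathbb{E}_X\!\left[\norm{X^T w^0 - X^T(w^0\odot m)}^2\right]=\frac{1}{s}\sum_{k\ne l}\mathbb{E}_X\!\left[\norm{X_{(k)}}\norm{X_{(l)}}\right]\abs{w_k^0}\abs{w_l^0},
\end{align*}
and the claimed bound is precisely the assertion that this cross-term mass is at most $\norm{w^0}^2$. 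I expect this to be the main obstacle: by independence each cross factor splits as $\mathbb{E}_X\norm{X_{(k)}}\,\mathbb{E}_X\norm{X_{(l)}}\le 1$, reducing the question to bounding a weighted off-diagonal sum of the $\abs{w_k^0}$ by $\norm{w^0}^2$. Pinning down this final estimate, rather than the elementary moment bookkeeping above, is where the real work lies, and I would watch carefully how the off-diagonal (an $\ell_1$-flavored quantity) is controlled relative to the $\ell_2$ target.
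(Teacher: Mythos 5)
Your reduction is carried out correctly and, in fact, more carefully than the paper's own proof: conditioning on $X$, invoking \cref{lemma:sketching}, computing $\mathbb{E}_X\norm{X_{(k)}}^2=1$ and $\mathbb{E}_X\norm{X^Tw^0}^2=\norm{w^0}^2$, and cancelling the diagonal against the subtracted term leaves exactly
\begin{align*}
\mathbb{E}\left[\norm{X^Tw^0-X^T(w^0\odot m)}^2\right]=\frac{1}{s}\sum_{k\neq l}\mathbb{E}_X\left[\norm{X_{(k)}}\right]\mathbb{E}_X\left[\norm{X_{(l)}}\right]\abs{w^0_k}\abs{w^0_l}\leq\frac{1}{s}\left(\norm{w^0}_1^2-\norm{w^0}^2\right).
\end{align*}
However, the step you defer to the end --- bounding this off-diagonal mass by $\norm{w^0}^2$ --- is not merely ``the real work''; it is false in general. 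Take $w^0=(1,\dots,1)\in\mathbb{R}^d$: the off-diagonal mass equals $\mathbb{E}_X[\norm{X_{(\cdot)}}]^2\,(d^2-d)$, and since $\mathbb{E}_X[\norm{X_{(\cdot)}}]^2=\frac{2}{n}\Gamma((n+1)/2)^2/\Gamma(n/2)^2\geq 2/\pi$ for all $n$, this exceeds $\norm{w^0}^2=d$ already for $d\geq 3$. So the expected error genuinely exceeds $\frac{1}{s}\norm{w^0}^2$ and no further estimate can close your gap. The honest conclusion of your computation is the weaker (correct) bound $\frac{1}{s}\norm{w^0}_1^2$, with the $\ell_1$ norm in place of the $\ell_2$ norm, or $\frac{d}{s}\norm{w^0}^2$ via Cauchy--Schwarz.

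For comparison, the paper's proof takes the same first two steps but then discards the negative term and asserts
$\mathbb{E}_X\bigl[\bigl(\sum_k\norm{X_{(k)}}\abs{w^0_k}\bigr)^2\bigr]\leq\mathbb{E}_X\bigl[\sum_k\norm{X_{(k)}}^2(w^0_k)^2\bigr]$,
i.e.\ $(\sum_k a_k)^2\leq\sum_k a_k^2$ for the nonnegative quantities $a_k=\norm{X_{(k)}}\abs{w^0_k}$. That inequality runs in the wrong direction: for nonnegative $a_k$ one has $(\sum_k a_k)^2\geq\sum_k a_k^2$, and applying Cauchy--Schwarz in the valid direction costs a factor of $d$. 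So your bookkeeping is sound and your suspicion about where the difficulty lies is exactly right; the resolution is that the lemma as stated does not hold, and both your argument and the paper's actually support only the $\norm{w^0}_1^2$ (equivalently $d\norm{w^0}^2$) version of the bound.
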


\begin{figure}[t]
    \vspace{-0.15in}
    \centering
    \includegraphics[width=0.6\linewidth]{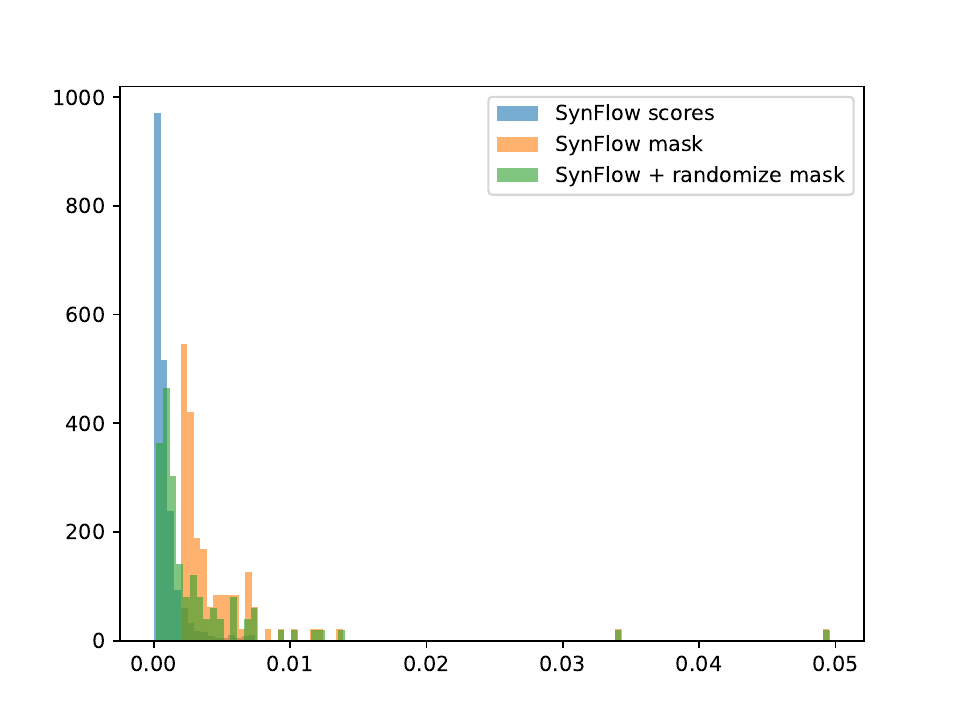}
    \vspace{-0.17in}
    \caption{Histogram of scores of NN before pruning and the weights chosen by SynFlow with/without randomization.}
        \label{fig:scores_hist}
\end{figure}

\begin{figure}[t]
    \vspace{-0.15in}
    \centering
    \begin{subfigure} {0.48\linewidth}
    \centering
    \includegraphics[width=\textwidth]{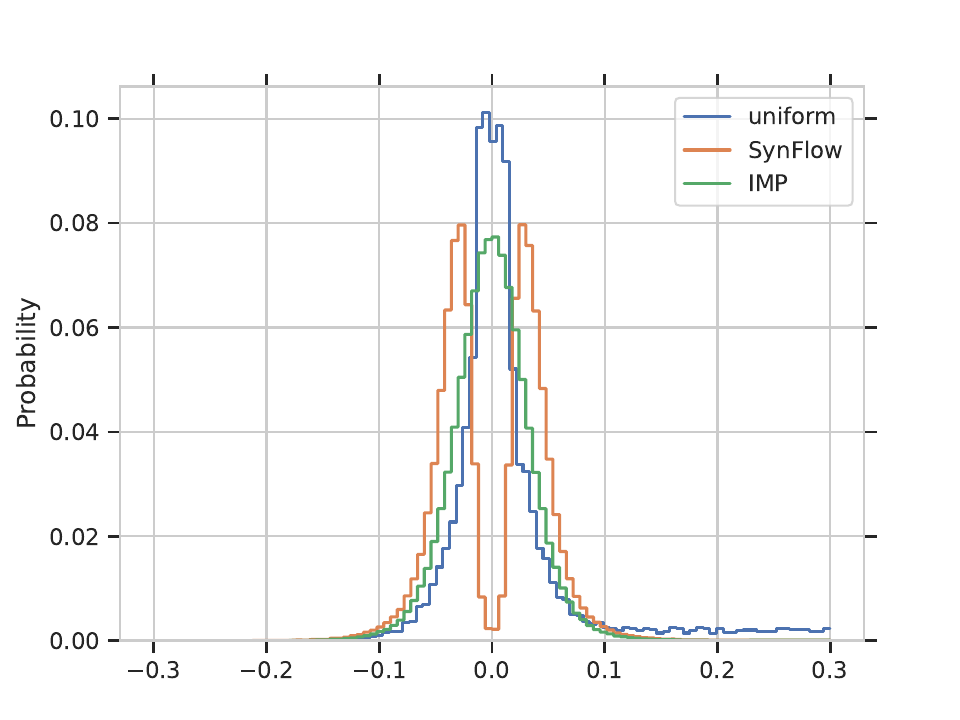}
    \end{subfigure}
    \begin{subfigure}{0.48\linewidth}
    \centering
    \includegraphics[width=\textwidth]{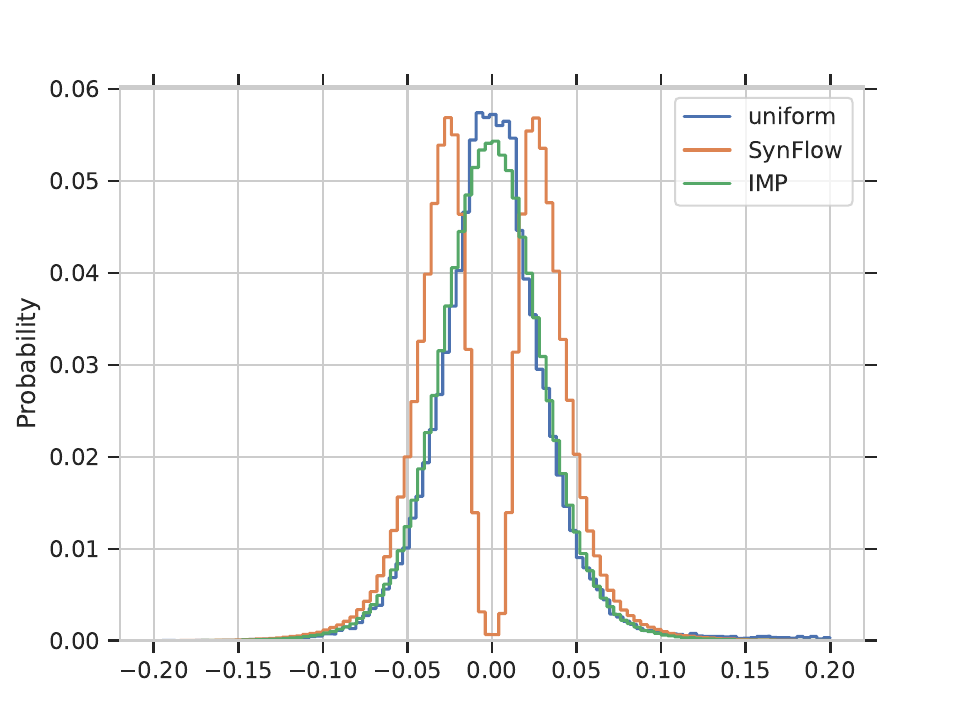}
    \end{subfigure} 
    \vspace{-0.15in}
    \caption{Weights histogram in sparse subnetworks at initialization for VGG-19 and CIFAR-10 for 2\% (left) and 5\% remaining weights. SynFlow and IMP have bias to large magnitude weights compared to a uniformly random mask.}
    \label{fig:init_mag}
\end{figure}

Note that the above error bounds hold at initialization. Yet, in the pruning at initialization setting, unlike matrix multiplication, we do not aim to apply a mask on a known parameter $w^0$ but on some unknown parameter $w^\star$ established after a learning procedure.
Thus, we bound the approximation error with the mask when we apply it to some unknown vector $w^\star$ and normally distributed data.

Before we establish our main result, we provide a lemma regarding the error, when the mask is found with some initial vector $w^0$ and data $\Tilde{X}$ but applied on other input data $X$ and weight vector $w^\star$.
\begin{lemma}\label{lemma:error_x}
    Suppose $X,\Tilde{X}\in\mathbb{R}^{d\times n}$, $w^0,w^\star\in\mathbb{R}^d$ and  $s\in \mathbb{Z}^+$ then when using \cref{algo:sketch_mask} with $p^0$ and $\Tilde{X}$ for $m$, the error can be bounded as follows
\begin{align*}
    & \mathbb{E}_{m}\left[\norm{X^T w^\star - X^T (w^\star \odot m)}^2\right] \\
    & ~~~~~~~~~~~~~~~~~~~~~~~~~~ \leq \frac{1}{s} \sum_{k=1}^d \frac{\sum_{j=1}^d \norm{\Tilde{X}_{(j)}}|w^0_j|}{\norm{\Tilde{X}_{(k)}}|w^0_k|} \norm{X_{(k)}}^2 {w^\star_k}^2.
\end{align*}
\end{lemma}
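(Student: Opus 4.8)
The plan is to recognize the masked feature vector $X^T(w^\star \odot m)$ as an importance-sampling Monte-Carlo estimator of $X^T w^\star$ and then run the standard variance computation for such estimators. Writing the exact product column-wise, $X^T w^\star = \sum_{k=1}^d w^\star_k X_{(k)}$, where each row $X_{(k)}\in\mathbb{R}^n$ is viewed as a vector. Let $p$ denote the sampling distribution used by \cref{algo:sketch_mask}, namely $p_k = \norm{\Tilde{X}_{(k)}}|w^0_k| / \sum_{j=1}^d \norm{\Tilde{X}_{(j)}}|w^0_j|$, and let $c_k$ be the number of times index $k$ is drawn over the $s$ rounds. Since the algorithm adds $1/(s\,p_{i_t})$ to $m_{i_t}$ at each round, the final mask satisfies $m_k = c_k/(s\,p_k)$, so that $X^T(w^\star\odot m) = \frac{1}{s}\sum_{t=1}^s v_{i_t}$ with $v_k := w^\star_k X_{(k)}/p_k$. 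First I would note that the draws $i_1,\dots,i_s$ are i.i.d.\ from $p$ (the sampling is with replacement), so the $v_{i_t}$ are i.i.d.\ copies of a single random vector $v$ with $\mathbb{E}[v] = \sum_{k=1}^d p_k\, w^\star_k X_{(k)}/p_k = X^T w^\star$, which establishes unbiasedness.

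Next I would decompose the expected squared error using this i.i.d.\ structure. Setting $\mu = X^T w^\star$,
\begin{align*}
\mathbb{E}_m\norm{X^T w^\star - X^T(w^\star\odot m)}^2 = \frac{1}{s^2}\sum_{t,t'=1}^s \mathbb{E}\langle v_{i_t} - \mu,\; v_{i_{t'}} - \mu\rangle.
\end{align*}
The key step is that for $t\neq t'$ independence factors the off-diagonal inner products into $\langle \mathbb{E}[v]-\mu,\, \mathbb{E}[v]-\mu\rangle = 0$, so only the $s$ diagonal terms survive, giving $\frac{1}{s}\bigl(\mathbb{E}\norm{v}^2 - \norm{\mu}^2\bigr)$. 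This is in fact an exact identity, consistent with the closed form that \cref{lemma:sketching} produces in the special case $\Tilde{X}=X$, $w^\star=w^0$.

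Finally, I would evaluate $\mathbb{E}\norm{v}^2 = \sum_{k=1}^d p_k\, \norm{X_{(k)}}^2 (w^\star_k)^2/p_k^2 = \sum_{k=1}^d \norm{X_{(k)}}^2 (w^\star_k)^2/p_k$, drop the nonnegative subtracted term $\frac{1}{s}\norm{\mu}^2$ to pass to an upper bound, and substitute $1/p_k = \sum_{j=1}^d \norm{\Tilde{X}_{(j)}}|w^0_j| / (\norm{\Tilde{X}_{(k)}}|w^0_k|)$ to reach the claimed right-hand side. I do not expect a genuine obstacle: the core is the textbook variance bound for importance sampling. The only points requiring care are the row/column bookkeeping in rewriting $X^T(w^\star\odot m)$ as a sum of i.i.d.\ draws, and the observation that the with-replacement (multinomial) sampling is precisely what makes the cross terms vanish. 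The asymmetry that the sampling probability is computed from the initialization data $\Tilde{X}$ and weights $w^0$ while the error is measured with the target data $X$ and weights $w^\star$ plays no role until the final substitution, which is purely mechanical once the variance expression is in hand.
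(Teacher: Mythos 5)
Your proof is correct and follows essentially the same route as the paper: both arguments treat $X^T(w^\star\odot m)$ as an importance-sampling estimator built from $s$ i.i.d.\ draws, use unbiasedness to reduce the expected squared error to a variance, exploit independence so only the $s$ diagonal terms survive, compute the second moment $\sum_k \norm{X_{(k)}}^2 (w^\star_k)^2/p_k$, drop the nonnegative subtracted term, and substitute the definition of $p^0$. The only difference is organizational—you work at the vector level in one pass, whereas the paper computes the expectation and variance coordinate-wise in two auxiliary lemmas and then sums over coordinates—which does not change the substance of the argument.
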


Next, we show the main result about the error of the mask for unknown parameters and random data. 

\begin{theorem} \label{thm:2weight_rand_data}
Suppose $X\in\mathbb{R}^{d\times n} \sim \frac{1}{\sqrt{n}} \mathcal{N}(0, I)$, $w^0,w^\star\in\mathbb{R}^d$ and  $s\in \mathbb{Z}^+$. Let $c = \max_j \abs{\frac{w^\star_j}{w^0_j}}$. When using \cref{algo:sketch_mask} with $p^0$ for $m$  the error can be bounded as follows
\begin{align*}
    &\mathbb{E}_{X}\left[\norm{X^T w^\star - X^T (w^\star \odot m)}^2\right]  \leq \frac{c^2}{s} \norm{w^0}_1^2 
\end{align*}

\end{theorem}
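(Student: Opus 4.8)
The plan is to reduce the statement to \cref{lemma:error_x} and then integrate out the Gaussian data, finishing with an elementary expansion of $w^\star$ about $w^0$. Because the mask here is drawn from $p^0$ using the very same data $X$ that later appears in the error, I would first apply \cref{lemma:error_x} with $\Tilde{X}=X$. This already handles the randomness of $m$ and leaves
$$\mathbb{E}_{m}\left[\norm{X^T w^\star - X^T (w^\star \odot m)}^2\right] \leq \frac{1}{s}\sum_{k=1}^d \frac{\left(\sum_{j=1}^d \norm{X_{(j)}}\abs{w^0_j}\right)\norm{X_{(k)}}}{\abs{w^0_k}}\,{w^\star_k}^2,$$
so that only the expectation over $X$ remains.

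Next I would take $\mathbb{E}_X$ term by term. The key probabilistic input is that the rows $X_{(k)}\in\mathbb{R}^n$ are independent with $\mathbb{E}_X\norm{X_{(k)}}^2 = 1$ under the stated normalization. Writing the data-dependent factor as $\sum_j \abs{w^0_j}\,\norm{X_{(j)}}\norm{X_{(k)}}$ and applying Cauchy--Schwarz to each product, $\mathbb{E}_X[\norm{X_{(j)}}\norm{X_{(k)}}]\le \sqrt{\mathbb{E}_X\norm{X_{(j)}}^2\,\mathbb{E}_X\norm{X_{(k)}}^2}=1$, collapses that factor to $\norm{w^0}_1$. This produces the clean intermediate bound
$$\mathbb{E}_{X}\left[\norm{X^T w^\star - X^T (w^\star \odot m)}^2\right] \leq \frac{\norm{w^0}_1}{s}\sum_{k=1}^d \frac{{w^\star_k}^2}{\abs{w^0_k}},$$
which concentrates all remaining work into the deterministic quantity $\sum_k {w^\star_k}^2/\abs{w^0_k}$.

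The final step is purely algebraic: I would set $w^\star = w^0 + (w^\star-w^0)$ and expand ${w^\star_k}^2 = {w^0_k}^2 + 2w^0_k(w^\star_k-w^0_k) + (w^\star_k-w^0_k)^2$. Dividing by $\abs{w^0_k}$ and summing yields three pieces: the diagonal part $\sum_k {w^0_k}^2/\abs{w^0_k} = \norm{w^0}_1$; a cross term $2\sum_k \sign(w^0_k)(w^\star_k-w^0_k)\le 2\norm{w^\star-w^0}_1$; and the quadratic remainder $\sum_k (w^\star_k-w^0_k)^2/\abs{w^0_k}$. Controlling the reciprocal weight factors in this last sum by the appropriate extreme of the initial magnitudes turns it into a term of the form $\norm{w^\star-w^0}^2/\norm{w^0}_\infty$, and reassembling the three contributions gives the bracketed expression; multiplying through by $\norm{w^0}_1/s$ then closes the argument.

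I expect the delicate point to be precisely the quadratic remainder $\sum_k (w^\star_k-w^0_k)^2/\abs{w^0_k}$: unlike the diagonal and cross terms, it is governed by the reciprocals of the initial weight magnitudes, so this is where the extreme-weight factor in the denominator enters and where one must take care of initial coordinates of small magnitude (or exactly zero, which $p^0$ assigns no mass and hence never selects). The two expectation reductions are by contrast routine once row-independence and the Cauchy--Schwarz estimate $\mathbb{E}_X[\norm{X_{(j)}}\norm{X_{(k)}}]\le 1$ are in hand, with the $\norm{w^\star\|$-type contribution being absorbed through the inequality already supplied by \cref{lemma:error_x}.
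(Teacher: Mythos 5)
Your proposal is correct and follows essentially the same route as the paper's proof: apply \cref{lemma:error_x} with $\Tilde{X}=X$, integrate out $X$ using $\mathbb{E}_X\norm{X_{(k)}}^2=1$, and expand ${w^\star_k}^2$ about $w^0_k$ to obtain the three bracketed terms. The only difference is cosmetic: you bound the cross expectations by Cauchy--Schwarz, $\mathbb{E}_X[\norm{X_{(j)}}\norm{X_{(k)}}]\le 1$, whereas the paper separates the $j=k$ term and uses row independence plus an explicit Gamma-function evaluation of $\mathbb{E}_X[\norm{X_{(\cdot)}}]^2\le 1$ for $j\ne k$; both land on the same intermediate bound $\tfrac{1}{s}\norm{w^0}_1\sum_k {w^\star_k}^2/\abs{w^0_k}$. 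One caution you rightly flag: the quadratic remainder satisfies $\sum_k (w^\star_k-w^0_k)^2/\abs{w^0_k}\le \norm{w^\star-w^0}^2/\min_k\abs{w^0_k}$, not $\norm{w^\star-w^0}^2/\norm{w^0}_\infty$ as the theorem statement (and the paper's own proof) write it, so the "appropriate extreme" is the minimum initial magnitude rather than the maximum.
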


Note that in post-training pruning, mask selection optimizes the left-hand-side (lhs) of Thm. \ref{thm:2weight_rand_data}, where $w^\star$ (the weights at the end of training) is given. 
The theorem bounds the lhs error when the mask $m$ is selected using the weights $w_0$, i.e., the weights at initialization.  
To ensure that the bound is meaningful, we compare to a simple baseline:
The error bound when the mask is sampled uniformly at random.

\begin{lemma} \label[lemma]{lemma:unimask_rand_data}
Let $X\in\mathbb{R}^{d\times n} \sim \frac{1}{\sqrt{n}} \mathcal{N}(0, I)$, $w^0\in\mathbb{R}^d$, $s\in \mathbb{Z}^+$ and $c = \max_j \abs{\frac{w^\star_j}{w^0_j}}$. Then when choosing $m$ uniformly at random (i.e., using \cref{algo:sketch_mask} with a uniform distribution) the error obeys
\begin{align*}
    & \mathbb{E}_{X}\left[\norm{X^T w^\star - X^T (w^\star \odot m)}^2\right] 
    \leq \frac{d}{s} \norm{w^\star}^2 \leq \frac{dc^2}{s} \norm{w^0}^2
\end{align*}
\end{lemma}

According to \cref{lemma:unimask_rand_data}, we have an additional dimension factor $d$ that is avoided in Thm.~\ref{thm:2weight_rand_data} and for uniform choice the bound is $\ell_2$ instead of $\ell_1$.
The ratio between the norms satisfies the inequalities $\norm{w_0} \leq \norm{w_0}_1 \leq \sqrt{d} \norm{w_0}$. The last equality holds only when $w_0$ is uniform. In this case a uniform choice of a mask is ideal (since all the entries in $w_0$ are equivalent). In addition, uniform initialization is not a common practice. 

\begin{table*}[t]
\centering
\begin{adjustbox}{width=\linewidth}
\begin{tabular}{cc|cccc|cccc}
\toprule
Model                            &  Density         & SynFlow                   & SynFlow + $\chi$ & \begin{tabular}[c]{@{}c@{}}SynFlow +\\rand. mask \end{tabular} & \begin{tabular}[c]{@{}c@{}}SynFlow +\\rand. mask + $\chi$\end{tabular}          & \begin{tabular}[c]{@{}c@{}}SNIP +\\normal data\end{tabular} & \begin{tabular}[c]{@{}c@{}c@{}}SNIP +\\rand. mask + \\normal. data\end{tabular} & \begin{tabular}[c]{@{}c@{}}SNIP +\\sparse data\end{tabular} & \begin{tabular}[c]{@{}c@{}c@{}}SNIP + \\sparse data + \\rand. mask\end{tabular}                      \\ \midrule
\multirow{3}{*}{VGG-19}& 10\% & 93.01          $\pm$ 0.19 & \textbf{93.12 ± 0.12} & 93.06 $\pm$ 0.12 &  \textbf{93.12 ± 0.18} & 92.15 ± 0.07 & 92.55 ± 0.57 & 92.85 ± 0.06 &  \textbf{93.16 ± 0.36}     \\
                      & 5\%  & \textbf{92.68 $\pm$ 0.11} & 92.52 ± 0.20          & 92.44 $\pm$ 0.11 &  92.63 ± 0.2           & 91.53 ± 0.04 & 92.18 ± 0.78 & 92.02 ± 0.18 &  \textbf{92.79 ± 0.78}     \\
                      & 2\%  & 91.68 $\pm$ 0.28          & 91.32 ± 0.11          & 91.63 $\pm$ 0.16 &\textbf{91.71 ± 0.28}   & 90.35 ± 0.31 & 91.04 ± 0.14 & 90.97 ± 0.11 & \textbf{ 92.03 ± 0.91}     \\ \midrule
\multirow{3}{*}{ResNet-20}& 10\% & 86.55 $\pm$ 0.18 & 86.70 ± 0.32 & \textbf{86.74 $\pm$ 0.22} &  86.59 ± 0.29          & 85.69 ± 0.37 & 86.77 ± 0.41 & 85.85 ± 0.07 & \textbf{87.02 ± 0.65}     \\
                         & 5\%  & 83.19 $\pm$ 0.36 & 83.28 ± 0.31 & \textbf{83.55 $\pm$ 0.38} &  83.45 ± 0.42          & 82.37 ± 0.69 & 83.59 ± 0.4 & 82.29 ± 0.01  & \textbf{83.72 ± 0.73}     \\
                         & 2\%  & 77.06 $\pm$ 0.35 & 77.10 ± 0.12 & \textbf{77.74 $\pm$ 0.51} &  \textbf{77.74 ± 0.43} & 77.2 ± 0.56  & 78.99 ± 0.73 & 76.98 ± 0.66 & \textbf{79.39 ± 0.49}      \\ \midrule \midrule
\multirow{3}{*}{VGG-19}           & 10\% & 69.90 $\pm$0.26  & 69.84 ± 0.15 & 69.97          $\pm$0.43 & \textbf{70.24±0.29} & \textbf{72.75±0.22} & 72.67 ± 0.01 &  72.13  ±  0.08  &  72.49  ± 0.33           \\
                                 & 5\%  & 68.25 $\pm$0.66  & 68.39 ± 0.31 & \textbf{68.65 $\pm$0.33} & 68.32±0.08           &71.48±0.72 & 71.34 ± 0.03  &  71.05  ±  0.11  &  \textbf{71.51 ±  0.45}          \\
                                 & 2\%  & 65.98 $\pm$0.38  & 65.68 ± 0.23 & 65.87          $\pm$0.50 & \textbf{66.00±0.22}           &1.00 	   & 1.00 &  48.48  ±  13.88   &  \textbf{53.05 ±15.38}          \\ \midrule
\multirow{3}{*}{ResNet-20}        & 10\% & 50.66 $\pm$0.78  & 50.97 ± 0.51 & \textbf{52.29 $\pm$0.64} & 51.80±0.60  &66.90±0.33 & 67.11 ± 0.62 & 67.32±0.13 & \textbf{67.52 ±  0.10}             \\
                                 & 5\%  & 41.12 $\pm$0.92  & 40.87 ± 0.76 & \textbf{42.92 $\pm$0.53} & 42.44±0.48  &60.97±0.45 & 61.03 ± 0.03 & 61.94±0.60 & \textbf{62.68 ±  0.32}             \\
                                 & 2\%  & 23.39 $\pm$1.04  & 23.52 ± 0.27 & \textbf{24.89 $\pm$0.75} & 24.35±0.72  &47.61±4.15 & 48.86 ± 0.07 & 50.30±0.04 & \textbf{51.76 ±  0.36}             \\ 
\bottomrule 
\end{tabular}
\end{adjustbox}
\vspace{-0.1in}
\caption{Accuracy results for vanilla SynFlow \cite{tanaka2020pruning}, SynFlow with $\chi$ distributed input and with mask randomization; and accuracy results for SNIP \cite{lee2018snip} with normal random data, sparse input data and mask randomization. Above are results with CIFAR-10 and below are results with CIFAR-100. More results appear in \cref{table:random_data}.} \label{table:synflow_cifar}
\vspace{-0.2in}
\end{table*}

\section{Using Sketching With Existing Pruning Methods}
We demonstrate the results shown in earlier sections (\cref{sec:approx_error,sec:SketchPrune}) within the context of existing techniques for pruning during initialization.
Our emphasis is on exploring SynFlow \cite{tanaka2020pruning}, an unsupervised method, alongside SNIP \cite{lee2018snip}. For SNIP, we propose an adaptation of the method to operate in an unsupervised manner.

\subsection{SynFlow and Sketching}
We turn to analyse theoretically the connection between SynFlow and sketching in the linear case.
We calculate the scores given by SynFlow and treat them as probabilities.

SynFlow performs a forward pass on the model with a vector of ones $\textbf{1}$ as input and the absolute values of the initialized parameters.
The scores are calculated by multiplication with $\textbf{1}$, which sums up all the output features:
\begin{align}\label{eq:synflow}
    R_{SF} = \textbf{1}^T f(\textbf{1}; \abs{w}).
\end{align}
We turn to show that by changing the input, SynFlow resembles sketching.
Instead of $\textbf{1}$ as input, we use $\norm{X_{(i)}}$. Thus,
\begin{align}\label{eq:synflow_chi}
    R_{SF} = \textbf{1}^T f(\norm{X_{(i)}}; \abs{w}) .
\end{align}
Note that in the linear model case, where $w\in \mathbb{R}^d$, we have
\begin{align*}
    R_{SF} = \left(\norm{X_{(1)}}, \norm{X_{(2)}}, ..., \norm{X_{(i)}}, ..., \norm{X_{(d)}} \right)\abs{w}.
\end{align*}
Thus, in this case, SynFlow score for each weight in $w$ is:
$$ \frac{\partial R_{SF}}{\partial \abs{w}_i} \odot \abs{w}_i = \norm{X_{(i)}} \abs{w}_i ,$$
which yields that when looking at the scores as a distribution $p_i\propto \norm{X_{(i)}} \abs{w}_i$. Note that in sketching, the probability $p_i^0$, as defined in \cref{eq:prob}, is proportional to the same term and yields the lowest expected approximation error \cite{drineas2006fast}.
Thus, it means that in the linear features setting the SynFlow scores and the optimal probability for sketching share the same properties.
Hence, if one uses randomization over the mask with SynFlow scores as in \cref{algo:sketch_mask}, we get the sketching algorithm. 
Note that in \cref{thm:2weight_rand_data}, we assume random normal data as input.
Under these assumptions, according to \cref{eq:synflow_chi}, SynFlow inputs are $\sqrt{\sum_{i=1}^n \frac{1}{n} x_i^2}$, where $x_i$ are random normal, which implies a $\chi$ distribution vector for its input.
To summarize, our analysis suggests that if we want to make SynFlow more similar to sketching, we should apply it with random sampling and $\chi$ distribution for its input. We demonstrate the advantage of this in the experiments.

\begin{table*}[t]
\centering
\begin{adjustbox}{width=0.9\textwidth}
    \begin{tabular}{l|ccc|ccc}
\toprule
Model                       & \multicolumn{3}{c|}{VGG-19}                       & \multicolumn{3}{c}{ResNet-20}                     \\ 
Density                     & 10\%           & 5\%            & 2\%            & 10\%           & 5\%            & 2\%            \\\midrule
Random & 91.14 ± 0.08 & 89.24 ± 0.22 & 86.28 ± 0.04 & 85.04 ± 0.12 & 72.08 ± 1.38 & 10.0            \\
Magnitude only (w/o data) & 93.05 ± 0.22 & 92.23 ± 0.9 & 91.28 ± 0.43 & 84.88 ± 0.7 & 81.97 ± 0.52 & 75.15 ± 0.86           \\\midrule
GraSP (supervised) & 92.21 ± 0.69 & 91.68 ± 1.21 & 90.92 ± 1.28 & 87.28 ± 0.04 & 84.37 ± 0.21 & 79.53 ± 0.06 \\
GraSP + normal data & \textbf{92.23 ± 0.95} & 91.23 ± 0.91 & 90.9 ± 1.63 & 86.3 ± 0.46 & 83.61 ± 0.13 & \textbf{79.59 ± 0.1} \\
GraSP + normal data + rand. mask & 92.12 ± 0.88 & \textbf{91.57 ± 1.1} & \textbf{91.22 ± 1.08} & \textbf{86.59 ± 0.29} & \textbf{84.35 ± 1.34} & 78.36 ± 0.16        \\\midrule
IMP (supervised) & 93.56 ± 0.19 & 92.64 ± 0.19 & 89.63 ± 1.57 & 89.63 ± 0.24 & 85.99 ± 1.3 & 82.46 ± 0.73 \\ 
IMP+rand. data & 91.53 ± 0.32 & 91.42 ± 0.19 & \textbf{90.58 ± 0.69} & 85.46 ± 0.11 & \textbf{83.07 ± 0.11} & \textbf{77.97 ± 0.04}  \\
IMP+rand. data+rand. mask & \textbf{91.57 ± 0.03} & \textbf{91.45 ± 0.12} & 90.55 ± 0.7 & \textbf{85.5 ± 0.15} & 82.77 ± 0.13 & 77.54 ± 0.49 \\ \midrule
\changes{ProsPr (supervised) } & 93.41 ± 0.05 & 93.0 ± 0.14 & 92.15 ± 0.17 & 87.08 ± 0.59 & 82.12 ± 0.66 & 73.04 ± 1.79\\
\changes{ProsPr  + rand. data }& \textbf{93.37 ± 0.08} & 93.06 ± 0.06 & \textbf{92.79 ± 0.07} & \textbf{87.61± 0.19} & 83.38 ± 0.45 & 74.48 ±0.75\\
\changes{ProsPr + rand. data + rand. mask} & 93.34 ± 0.22 & \textbf{93.15 ± 0.15} & 92.47 ± 0.04 & 87.5±0.19 & \textbf{83.69±0.42} & \textbf{75.55± 0.55}\\
\midrule \midrule
Random & 66.02 ± 0.13 & 62.7 ± 0.57 & 58.28 ± 0.25 & 51.09 ± 0.43 & 26.61 ± 3.51 & 1.0 ± 0.0 \\
Magnitude & 69.43 ± 0.49 & 68.09 ± 0.21 & 65.18 ±0.47  & 52.03 ± 0.24 & 45.64 ± 0.4 & 32.9 ± 0.65  \\
 \midrule
GraSP (supervised)                      & 70.64 ± 0.43 & 70.03 ± 0.55 & 67.78 ± 0.42 & 67.24 ± 0.23 & 63.08 ± 0.33 & 53.18 ± 0.94           \\
GraSP+normal data            &  70.52 ± 0.18 & 70.06 ± 0.29 & 68.07 ± 0.16 & 67.37 ± 0.02 & 63.66 ± 0.06 & \textbf{54.15 ± 0.63} \\
GraSP+normal data+rand. mask & \textbf{71.15 ± 0.32}   & \textbf{70.08 ± 0.01}          & \textbf{68.53 ± 0.18}           & \textbf{67.39 ± 0.46}           & \textbf{64.34 ± 0.15} & 53.79 ± 0.16 \\ \midrule
IMP (supervised) & 70.88 ± 0.96 & 69.87 ± 1.28 & 67.9 ± 1.36 & 58.38 ± 2.97 & 50.39 ± 0.16 & 40.88 ± 0.22       \\
IMP + rand. data & 66.84 ± 0.16 & \textbf{66.23 ± 0.17} & 54.81 ± 6.87 & 51.66 ± 0.68 & 43.84 ± 0.17 & 32.92 ± 0.65 \\
IMP + rand. data + rand. mask & \textbf{67.1 ± 0.21} & 65.3 ± 0.23 & \textbf{59.05 ± 0.14} & \textbf{52.05 ± 0.6} & \textbf{44.68 ± 0.38} & \textbf{33.37 ± 0.74}           \\ \midrule
\changes{ProsPr (supervised)} &  59.12 ± 2.86 & 52.61 ± 2.72 & 52.63 ± 1.36  & 44.57 ± 2.18 & 30.6 ± 3.49 & \textbf{26.30 ± 0.50} \\
\changes{ProsPr  + rand. data} & 11.55 ± 2.11 & 8.43 ± 0.58 & 8.39 ± 0.15 & 45.41 ± 3.5 & 37.15 ± 2.32 & 22.69 ± 0.22\\
\changes{ProsPr + rand. data + rand. mask} &\textbf{72.68 ± 0.16} & \textbf{71.38 ± 0.27} & \textbf{69 ± 0.36} & \textbf{53.91 ± 1.16} & \textbf{42.69 ± 0.33} & \textbf{27.45 ± 0.60}\\
\bottomrule
\end{tabular}
\end{adjustbox}
\vspace{-0.1in}
\caption{Average accuracy results with GraSP \cite{wang2020picking}, Iterative Magnitude Pruning (IMP) \cite{DBLP:conf/iclr/FrankleC19} and ProsPr \cite{alizadeh2022prospect} with random data used for pruning and mask randomization that replaces the thresholding. Reported accuracy results with CIFAR-10 (top) and CIFAR-100 (bottom).
Note that the supervised versions are provided only as a reference.
} \label{table:random_data}
\vspace{-0.1in}
\end{table*}

\subsection{SNIP and Sketching} \label{sec:snip}
In this section we turn to analyze another well-known method of pruning at initialization named SNIP \cite{lee2018snip}, which aims to estimate the importance of each weight according to its magnitude and gradient magnitude at initialization.
The magnitude of the gradient is calculated with respect to input data, $\mathcal{D}$.
SNIP assigns the following saliency score of mask at index $j$:
$g_j(w;\mathcal{D}) = \abs*{\frac{\partial L(m\odot w;\mathcal{D})}{\partial m_j}},$
where $L$ is the loss used for training and the value of the mask before pruning is $1$ in all its entries.
We analyze the case of $\ell_1$ loss for inputs $X$. 
Previously, a statistical justification for using $\ell_1$ loss for DNN classification was proven \cite{janocha2017loss} and it was further
shown that using $\ell_1$ loss can be beneficial for robust classification \cite{ghosh2017robust}.
The weights' sailency score in the linear model is
\begin{align*}
& g_j(w; X,y) = \frac{1}{n} \abs*{\frac{\partial \sum_{i=1}^n \abs{x_i^T (w\odot m - y_i)}}{\partial m_j}} \\
    & ~~~~~~~~~~~~~~~~~~
= \frac{1}{n} \abs{w_j} \abs*{ \sum_{i=1}^n \sign(x_i^Tw - y_i) x_{ij}} .
\end{align*}
Again, we look for input data that causes SNIP to behave like sketching, which is a well-established field. 
Interestingly, if the data have only one non-zero entry in each row of $X$, i.e. $\norm{X_{(j)}}_0 = 1$, it holds that the induced probability is $p_j = \frac{\abs{w_j}{\abs{x_{ij, \text{ s.t. } x_{ij} \ne 0}}}}{\sum_{k=1}^d \abs{w_k} \abs{x_{ik, \text{ s.t. } x_{ik} \ne 0}}} = \frac{\abs{w_j} \norm{X_{(j)}}}{\sum_{k=1}^d \abs{w_k} \norm{X_{(k)}}}$.
Note that this probability is optimal from a sketching perspective. Thus, we suggest to apply SNIP with random sparse data and random sampling of the mask. 
We validate this empirically in the experiments.

\section{Neural Tangent Kernel Pruning}
\label{sec:prune_NTK}
We turn to apply our sketching results to the NTK regime \cite{lee2019wide,jacot2018neural,arora2019fine},
which was introduced for examining the training dynamics of DNNs.
Under the infinite width assumption, the gradient descent steps over a DNN become analytically tractable and similar to learning with a known kernel.

NTK assumptions allow representing DNN features in a linearized manner.
Thus, we can apply our sketching results above. To this end, we provide some basic NTK definitions.

{\bf NTK definitions.} We use the same notations as appeared in \cite{lee2019wide}: $\theta_t \in \mathbb{R}^d$ is the vectorization of the parameters at time $t$; $f_t(x)\in\mathbb{R}^m$ is the output of the model at time $t$;  $(\mathcal{X}, \mathcal{Y})$ are the input vectors and labels (we assume that $\mathcal{Y}_i\in\mathbb{R}$); $f_t(\mathcal{X})$ is a concatenation of all outputs;
$n$ is the notion of width of the model; $J(\theta_t) = \nabla_\theta f_t(\mathcal{X}) \in \mathbb{R}^{\abs{\mathcal{X}}m\times d}$ is the Jacobian;  $\hat{\Theta}_t = \nabla_\theta f_t(\mathcal{X})\nabla_\theta f_t(\mathcal{X})^T = \frac{1}{n} J(\theta_t)J(\theta_t)^T$ is the empirical NTK. We refer to the analytic kernel as $\Theta = \lim_{n\rightarrow\infty} \hat{\Theta}_0$.

For the claims in \cite{lee2019wide} to hold we use assumptions that are based on the original paper: \textbf{(i)} The empirical NTK converges in probability to the analytic NTK: $\hat{\Theta}_0\rightarrow_{n\rightarrow\infty}\Theta$; \textbf{(ii)} The analytic NTK $\Theta$ is full rank. $0<\lambda_{\min} \leq \lambda_{\max} < \infty$ and let   $\eta_{critical} = 2(\lambda_{\min} + \lambda_{\max})^{-1}$; \textbf{(iii)} $(\mathcal{X}, \mathcal{Y})$ is a compact set and for $x,\Tilde{x}\in\mathcal{X}$, $x\ne\Tilde{x}$; 
and \textbf{(iv)} The Jacobian is locally Lipschitz as defined in \cref{def:local_lip} (originally stated in \cite[Lemma 2]{lee2019wide}):
\begin{definition}[Local Lipschitzness of the Jacobian]\label[definition]{def:local_lip}
Denote $B(\theta_0, C) = \{\theta : \norm{\theta - \theta_0}\leq C\}$.
The Jacobian is locally Lipschitz if there exists $K>0$ such that for every $C>0$, with high probability over random initialization the following holds for $\theta,\Tilde{\theta}\in B(\theta_0, C)$
\begin{align*}
    \norm{J(\theta) - J(\Tilde{\theta})}_F \leq K\norm{\theta-\Tilde{\theta}}  \quad \text{and} \quad 
    \norm{J(\theta)}_F \leq K.
\end{align*}
\end{definition}
Note that local Lipschitzness constants are usually determined by the DNN activation functions.

{\bf NTK Pruning.}
We focus on the linearized approximation $f_t^{lin}(x) = \nabla_{\theta_t} f_t(x) \theta_t$ of the model features.
We aim to find a mask to apply on $\theta_t$ according to the linear features of the network at initialization, $f_0^{lin}(\mathcal{X})$ and $\theta_0$, using sketching as in \cref{algo:sketch_mask}.
Thus, we can use our previous analysis to bound the error induced by the mask on the features at time $t$, $f_t^{lin}(x)$. The following theorem relies on the NTK bounds in \cite[Theorem G.4]{lee2019wide}.
Proof is in the sup. mat.

\begin{theorem}\label{thm:NTK}
    Under assumptions [i-iv], $\delta_0 > 0$ and $\eta_0\leq \eta_{critical}$,
    let $F(A) = \sum_{i=1}^d \frac{1}{\norm{A^{(i)}}}$ and $m\in\mathbb{R}^d$ be a $s$-dense mask found with \cref{algo:sketch_mask} with $p$ according to $f_0^{lin}(\mathcal{X})$ and $\theta_0$ (\cref{eq:prob}).
    There exist $R_0 >0$, $K>1$ such that for every $n\geq N$ the following holds with probability $>1-\delta_0$ over random initialization when applying GD with learning rate of $\eta_0$:
    \begin{align*}
        &\mathbb{E}_m\left[\norm{f_t^{lin}(\mathcal{X}) - \nabla_{\theta_t} f_t(\mathcal{X}) (\theta_t \odot m)}^2\right] \leq \\
        & \frac{1}{s} K^3 \norm{\theta_0}_1 F(J(\theta_0)) \left(\norm{\theta_0}_1 + F(\theta_0) \frac{9K^4R_0^2}{\lambda_{\min}^2} + 6\sqrt{d} \frac{K^3R_0}{\lambda_{\min}}\right).
    \end{align*}
\end{theorem}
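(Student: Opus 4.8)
The plan is to recognize the NTK pruning error as a single instance of the data-dependent sketching bound already proved in \cref{lemma:error_x}. Writing the linearized features as $f_t^{lin}(\mathcal{X}) = \nabla_{\theta_t} f_t(\mathcal{X})\,\theta_t = J(\theta_t)\theta_t$ and the pruned features as $J(\theta_t)(\theta_t\odot m)$, the quantity to be controlled is exactly $\norm{X^T w^\star - X^T(w^\star\odot m)}^2$ under the identification $X^T = J(\theta_t)$ and $w^\star = \theta_t$, while the mask is drawn with the initialization probability $p^0$ built from $\Tilde{X}^T = J(\theta_0)$ and $w^0 = \theta_0$ (so that $\Tilde{X}^T w^0 = f_0^{lin}(\mathcal{X})$ and $p^0_i \propto \norm{J(\theta_0)^{(i)}}\abs{\theta_{0,i}}$, matching \cref{eq:prob}). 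I would first condition on the high-probability event over the random initialization on which the NTK guarantees of \cite[Theorem G.4]{lee2019wide} hold; on that event both Jacobians are fixed, so \cref{lemma:error_x} applies verbatim and gives
\[
\mathbb{E}_m\!\left[\norm{f_t^{lin}(\mathcal{X}) - \nabla_{\theta_t} f_t(\mathcal{X})(\theta_t\odot m)}^2\right] \le \frac{1}{s}\left(\sum_{j}\norm{J(\theta_0)^{(j)}}\abs{\theta_{0,j}}\right)\sum_{k}\frac{\norm{J(\theta_t)^{(k)}}^2\,\theta_{t,k}^2}{\norm{J(\theta_0)^{(k)}}\abs{\theta_{0,k}}}.
\]

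Next I would strip off the Jacobian column norms using the local Lipschitz assumption (iv). Since $\norm{J(\theta)^{(i)}} \le \norm{J(\theta)}_F \le K$ for every column, the leading factor satisfies $\sum_j \norm{J(\theta_0)^{(j)}}\abs{\theta_{0,j}} \le K\norm{\theta_0}_1$ and each numerator obeys $\norm{J(\theta_t)^{(k)}}^2 \le K^2$; crucially this is applied at the trained point $\theta_t$ as well, which is legitimate once $\theta_t$ is shown to lie in the ball $B(\theta_0,C)$. This produces the prefactor $\tfrac{1}{s}K^3\norm{\theta_0}_1$ together with the residual sum $\sum_k \theta_{t,k}^2 / (\norm{J(\theta_0)^{(k)}}\abs{\theta_{0,k}})$.

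The core of the argument is then to expand $\theta_{t,k}^2 = (\theta_{0,k} + \Delta_k)^2$ with $\Delta = \theta_t - \theta_0$ and estimate the three resulting sums using the elementary inequality $\sum_k x_k y_k \le (\sum_k x_k)(\sum_k y_k)$ for nonnegative sequences. The square term gives $\sum_k \abs{\theta_{0,k}}/\norm{J(\theta_0)^{(k)}} \le F(J(\theta_0))\,\norm{\theta_0}_1$; the cross term is bounded, after using $\sign(\theta_{0,k})\Delta_k \le \abs{\Delta_k}$, by $2\,F(J(\theta_0))\,\norm{\Delta}_1 \le 2\sqrt{d}\,F(J(\theta_0))\,\norm{\Delta}$ via $\norm{\Delta}_1 \le \sqrt{d}\,\norm{\Delta}$; and the remainder term, using $\Delta_k^2 \le \norm{\Delta}^2$, is bounded by $\norm{\Delta}^2\,F(J(\theta_0))\,F(\theta_0)$. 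Substituting the displacement bound $\norm{\theta_t-\theta_0} \le 3K^2R_0/\lambda_{\min}$ supplied by \cite[Theorem G.4]{lee2019wide}, and absorbing $K^2 \le K^3$ in the cross term for a uniform constant, turns the three summands into $\norm{\theta_0}_1$, $6\sqrt{d}\,K^3 R_0/\lambda_{\min}$, and $F(\theta_0)\,9K^4 R_0^2/\lambda_{\min}^2$, each multiplied by the common factor $F(J(\theta_0))$, which is precisely the stated bound.

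The main obstacle is not the sketching algebra, which is routine, but the clean interface with the NTK machinery. I must invoke \cite{lee2019wide} to obtain a displacement bound on $\norm{\theta_t-\theta_0}$ that holds \emph{uniformly in $t$} with probability at least $1-\delta_0$ for all widths $n\ge N$, and verify that the entire gradient-descent trajectory remains inside the ball $B(\theta_0,C)$ on which the Lipschitz constant $K$ is valid, so that $\norm{J(\theta_t)}_F\le K$ may be used at $\theta_t$ and not merely at initialization. Pinning down the correct powers of $K$, $R_0$, and $\lambda_{\min}$ in that displacement bound, and confirming that they reproduce the exact constants $9$, $6$, and the factor $\sqrt{d}$ appearing in the statement, is where the care is required.
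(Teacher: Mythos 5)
Your proposal is correct and follows essentially the same route as the paper's proof: reduce to \cref{lemma:error_x} with $\Tilde{X}^T=J(\theta_0)$, $w^0=\theta_0$, $X^T=J(\theta_t)$, $w^\star=\theta_t$, control the Jacobian factors by $K$ via local Lipschitzness, expand $\theta_t=\theta_0+(\theta_t-\theta_0)$, and plug in the displacement bound from \cite[Theorem~G.4]{lee2019wide}. The only cosmetic difference is the order in which you apply the product inequality and the termwise bound $\norm{J(\theta_t)^{(k)}}^2\le K^2$, which yields the identical final expression.
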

Note that for a mask density of $s = O(\sqrt{d})$, the error of using the mask at time $t$ only depends on the initialization and constants from the NTK assumptions.
Additionally, $\nabla_{\theta_t} f_t(\mathcal{X}) (\theta_t \odot m)$ are the masked linearized outputs of the model at time $t$.
We prove the above theorem using the bound on the Jacobian norm and the claim that under the conditions of \cite[Theorem G.4]{lee2019wide} it is guaranteed that the distance of the parameters at time $t$, $\theta_t$, from the parameters at initialization, $\theta_0$, is bounded.

\section{Experiments}
\label{sec:experiments}
We study our theoretical insights empirically on sparse DNNs.
We tested multiple pruning at initialization methods: SynFlow \cite{tanaka2020pruning}, SNIP \cite{lee2018snip}, GraSP \cite{wang2020picking}, Iterative Magnitude Purning (IMP), the algorithm suggested for finding the winning ticket in \cite{DBLP:conf/iclr/FrankleC19}, \changes{and ProsPr \cite{alizadeh2022prospect}}. 

\begin{table}
\centering
\begin{adjustbox}{width=0.8\linewidth}
\begin{tabular}{cc|cc}
\toprule
Model                       & Density & SynFlow & \begin{tabular}[c]{@{}c@{}}SynFlow +\\random mask\end{tabular} \\ \midrule
\multirow{3}{*}{ResNet-18}   & 10\%    & 58.3    & \textbf{60.64}                                                     \\
                            & 5\%     & 57.63   & \textbf{58.57}                                                     \\
                            & 2\%     & 54.62   & \textbf{55.56}                                                     \\ \midrule
\multirow{3}{*}{WideResNet} & 10\%    & 59.25   & \textbf{60.22}                                                     \\
                            & 5\%     & 57.49   & \textbf{58.84}                                                     \\
                            & 2\%     & 54.8    & \textbf{55.54}   \\ \midrule\midrule
\changes{\multirow{2}{*}{ResNet-50}}  & \changes{10\%   } & \changes{62.76}   & \changes{\textbf{63.05}}                \\
                         & \changes{5\%}     & \changes{57.65}   &  \changes{\textbf{57.99}} \\ \bottomrule  
\end{tabular}
\end{adjustbox}
\vspace{-0.1in}
\caption{SynFlow + random mask results for Tiny-ImageNet (top) \changes{ and ImageNet (bottom).}} \label{table:tiny_imagenet}
\end{table}

\begin{table}
\centering
\begin{adjustbox}{width=0.8\linewidth}
\changes{
\begin{tabular}{cc|cc}
\toprule
Model                       & Density & SynFlow & \begin{tabular}[c]{@{}c@{}}SynFlow +\\random mask \end{tabular} \\ \midrule
\multirow{3}{*}{Kaiming}   & 10\%  & 93.01 $\pm$ 0.19  &  \textbf{93.06 $\pm$ 0.12}                                                         \\
                            & 5\%     & \textbf{92.68 $\pm$ 0.11}   &  92.44 $\pm$ 0.11                         \\
                            & 2\%     & \textbf{91.68 $\pm$ 0.28 }  & 91.63 $\pm$ 0.16              \\ \midrule
\multirow{3}{*}{Normal} & 10\%    & 92.16 $\pm$0.18   & \textbf{92.68 $\pm$ 1.03  }                                                   \\
                            & 5\%     &  91.11 $\pm$ 0.01 &  \textbf{92.22 $\pm$ 1.02}                                                    \\
                            & 2\%     &  90.27 $\pm$ 0.07  &  \textbf{91.44 $\pm$ 1.09}  \\ \midrule
\multirow{3}{*}{Xavier} & 10\%    & 91.5 $\pm$ 0.17   &  \textbf{93.33 $\pm$ 0.18}                                                   \\
                            & 5\%     &  91.57 $\pm$ 0.02  & \textbf{93.11 $\pm$0.14} \\ 
                            & 2\% & 91.31 $\pm$ 0.05 & \textbf{92.03 $\pm$ 0.42} \\
                            \bottomrule  
\end{tabular}}
\end{adjustbox}
\vspace{-0.1in}
\caption{The effect of mask randomization under various initialization methods. 
Note that pruning methods during initialization are executed based on the initial weights, and our mask randomization technique demonstrates robustness against variations in initialization. Across all initialization schemes, we either match or enhance performance, whereas SynFlow leads to performance degradation.} \label{table:init_method}
\vspace{-0.18in}
\end{table}

We use CIFAR-10/100 \cite{krizhevsky2009learning}, Tiny-ImageNet \cite{wu2017tiny} and \changes{ImageNet \cite{deng2009imagenet}} with VGG-19 \cite{simonyan2014very}, ResNet-20 \cite{he2016deep}, WideResNet-20-32 \cite{zagoruyko2016wide} \changes{and ResNet-50}. 
For CIFAR-10/100 and Tiny-ImageNet, we employ SGD with momentum, batch size $128$, and learning rate $0.1$ multiplied by $0.1$ after epoch $80$ and $120$.
We train the models for $160$ epochs with weight decay $10^{-4}$.
\changes{For ImageNet we use $90$ epochs, batch size of $100$ and the learning rate is multiplied by $0.1$ after epochs $30$, $60$ and $80$.}
\changes{Unless otherwise is stated we use Kaiming initialization \cite{he2015delving}. }
For Tiny-ImageNet, we use a modified version of ResNet-18 and WideResNet-18.
Our code is based on the repositories of \cite{open-lth,tanaka2020pruning,su2020sanity,alizadeh2022prospect}.
All the experiments performed on a single NVIDIA GeForce RTX 2080 Ti.
Our code is attached in the sup. mat.

We found the SynFlow mask using $100$ iterations. We employ SNIP, GraSP \changes{and ProsPr} with batch size of $256$.
For IMP, we use $1000$ iterations of warmup for VGG-19 and $2000$ iterations for ResNet-20 with CIFAR-10 and $6000$ for CIFAR-100 with ResNet20.
In the \changes{data-free pruning} case, examples are drawn from a normal distribution with the same expectation and standard deviation as the original dataset.
Results include the average and standard deviation of 3 different seeds.

Given that in sketching the mask is selected randomly, we suggest doing the same for the purning at initialization techniques. 
Our proposed mask randomization strategy is performed in three steps. The first step is computing the threshold according to the number of desired remaining weights.
We compute it globally according to all the scores in the network.
The second is to compute the number of remaining weights within each layer with respect to hard thresholding. Finally, the third step is to randomly select a mask for each layer according to the granularity found in the previous step and the scores of the pruning method.
We randomize the mask in this manner due to computational limitations.
In Appendix~\ref{sec:layerwise_sparse} we present results where the sparsity is imposed in a layerwise manner and all the layers have the same sparsity.

\cref{fig:init_mag} shows that the weights' magnitudes chosen by SynFlow and IMP are larger than a uniform random choice of parameters.
Note that SynFlow has a stronger bias towards larger magnitudes than IMP although the weight magnitude is part of IMP's scoring function. 
Fig. \ref{fig:norm} presents a case, where the IMP mask has an extremely high norms compared to random masks and the high norm is maintained across sparsities.
We tested the effect of randomizing the mask on the chosen scores. Fig. \ref{fig:scores_hist} compares the histogram of scores of weights chosen with SynFlow with and without randomization compared to the distribution of all the scores in the network.
We report the scores of VGG-19 with CIFAR-10 and for simplicity we present the results only on a subset of $1000$ weights chosen uniformly at random.

To show the effect of changes used in the theoretical analysis we tested SynFlow with mask randomization and replacing the all-one vector with $\chi$ distributed random data.
We generate the $\chi$ distribution according to the $\ell_2$ norms of a vector with normally distributed variables and dimension $n=128$.
We randomize data at each pruning iteration. 
To validate empirically the relation drawn between SNIP and sketching above, we tested SNIP with sparse random data. We choose randomly for each input pixel location, a single image in the batch where the pixel is non-zero.
Note that the images are drawn from a normal distribution and are not natural images.
\cref{table:synflow_cifar} shows that replacing the thresholding with a random mask and replacing the input with the distribution derived from our analysis improve results in most cases when compared to the original SynFlow and naive data-independent SNIP.
For SynFlow, the use of mask randomization leads to improved performance.
The combination of sparse data and mask randomization leads to superior accuracy for SNIP.
It suggests that our sampling of mask and data can be used and improve other pruning methods based on it, e.g. \cite{alizadeh2022prospect,de2020progressive}.
Additional results with the original supervised SNIP are in Table~\ref{tbl:sup_snip} (in Appendices).
Most accuracy results with random masks and sparse data outperform SNIP even with labeled input.

Our claim that finding a good sparse subnetwork at initialization does not necessarily depend on the data is tested by an experiment where the masks are learned with completely random input and then retrained with real supervised data.
\cref{table:random_data} reports CIFAR-10/100 accuracy results. We provide the supervised methods only as a reference and do not boldface them.
We can see that randomized masks produce comparable and usually better results than pruning methods with random data.
Also, it can be seen that, indeed, the performance of data dependent pruning methods is only partially explained by the use of supervised input data, as shown in \cite{su2020sanity}.
Note that accuracy degradation is expected since those methods are designed to work with labeled data.

For Tiny-ImageNet, the improvement in performance is clear with mask randomization inspired by sketching; see \cref{table:tiny_imagenet}.
The improvement in accuracy is around $1\%$ for all sparsities and architectures.
\changes{Mask randomization is also beneficial for performance with ImageNet.}
Overall, randomizing the masks leads to a new the state-of-the-art with SynFlow.

Our analysis is done when the weights are given and there is a dependence on the initial value of the weights. Therefore, we tested that mask randomization based on scores is robust to different initialization distributions.
\Cref{table:init_method} shows the results for Kaiming \cite{he2015delving}, Normal and Xavier \cite{glorot2010understanding} initializations with CIFAR-10 and VGG-19.
Note that the benefit of mask randomization is not limited to a specific initialization method.

\section{Conclusion and Future Work}
This work presents and analyzes the task of pruning at initialization from a sketching perspective.
Based on our analysis, we gain a new justification for the claim that good sparse subnetworks are data independent.
Additionally, we apply ideas from sketching to DNNs.
Our proposed changes can be added to existing and future pruning methods.

{\bf Limitation.} While we have proposed a novel connection between pruning at initialization and sketching, our framework has some limitations. We apply it only in the setting of pruning at initialization. 
One might investigate our approach to other settings of pruning after or during training and structured pruning methods. For example, to draw a relationship to the latter, one may extend our analysis of SNIP to be with structured sparsity.
Moreover, we have studied only one sketching approach. 
We leave for future work exploiting more complex sketching methods, which possibly can lead to new insights and improvements to DNN pruning.
Also, future research may be conducted to generalize the rather simple linear features framework to a deeper model where the search space for the mask is even larger.
The search space in this case might be ambiguous, i.e., two different masks can lead to the same output \cite{zheng20identifiability}.
\changes{We leave for future research to theoretically analyse the effect of initialization on pruning in the lens of sketching.}
We believe that the relation we draw here can be used in many other directions besides those indicated above and contribute to DNN understanding and pruning.

\bibliographystyle{IEEEtran}
\bibliography{ref}

@article{drineas2006fast,
  title={Fast Monte Carlo algorithms for matrices I: Approximating matrix multiplication},
  author={Drineas, Petros and Kannan, Ravi and Mahoney, Michael W},
  journal={SIAM Journal on Computing},
  volume={36},
  number={1},
  pages={132--157},
  year={2006},
  publisher={SIAM}
}

@article{wang2020picking,
  title={Picking winning tickets before training by preserving gradient flow},
  author={Wang, Chaoqi and Zhang, Guodong and Grosse, Roger},
  journal={arXiv preprint arXiv:2002.07376},
  year={2020}
}

@article{tanaka2020pruning,
  title={Pruning neural networks without any data by iteratively conserving synaptic flow},
  author={Tanaka, Hidenori and Kunin, Daniel and Yamins, Daniel L and Ganguli, Surya},
  journal={Advances in Neural Information Processing Systems},
  volume={33},
  pages={6377--6389},
  year={2020}
}

@article{lee2019wide,
  title={Wide neural networks of any depth evolve as linear models under gradient descent},
  author={Lee, Jaehoon and Xiao, Lechao and Schoenholz, Samuel and Bahri, Yasaman and Novak, Roman and Sohl-Dickstein, Jascha and Pennington, Jeffrey},
  journal={Advances in neural information processing systems},
  volume={32},
  year={2019}
}

@inproceedings{DBLP:conf/iclr/FrankleC19,
  author    = {Jonathan Frankle and
               Michael Carbin},
  title     = {The Lottery Ticket Hypothesis: Finding Sparse, Trainable Neural Networks},
  booktitle = {7th International Conference on Learning Representations, {ICLR} 2019,
               New Orleans, LA, USA, May 6-9, 2019},
  publisher = {OpenReview.net},
  year      = {2019},
  url       = {https://openreview.net/forum?id=rJl-b3RcF7},
  timestamp = {Thu, 25 Jul 2019 13:03:15 +0200},
  biburl    = {https://dblp.org/rec/conf/iclr/FrankleC19.bib},
  bibsource = {dblp computer science bibliography, https://dblp.org}
}

@article{lee2018snip,
  title={Snip: Single-shot network pruning based on connection sensitivity},
  author={Lee, Namhoon and Ajanthan, Thalaiyasingam and Torr, Philip HS},
  journal={arXiv preprint arXiv:1810.02340},
  year={2018}
}

@article{de2020progressive,
  title={Progressive skeletonization: Trimming more fat from a network at initialization},
  author={de Jorge, Pau and Sanyal, Amartya and Behl, Harkirat S and Torr, Philip HS and Rogez, Gregory and Dokania, Puneet K},
  journal={arXiv preprint arXiv:2006.09081},
  year={2020}
}

@article{alizadeh2022prospect,
  title={Prospect pruning: Finding trainable weights at initialization using meta-gradients},
  author={Alizadeh, Milad and Tailor, Shyam A and Zintgraf, Luisa M and van Amersfoort, Joost and Farquhar, Sebastian and Lane, Nicholas Donald and Gal, Yarin},
  journal={arXiv preprint arXiv:2202.08132},
  year={2022}
}

@article{jacot2018neural,
  title={Neural tangent kernel: Convergence and generalization in neural networks},
  author={Jacot, Arthur and Gabriel, Franck and Hongler, Cl{\'e}ment},
  journal={Advances in neural information processing systems},
  volume={31},
  year={2018}
}

@article{wu2017tiny,
  title={Tiny imagenet challenge},
  author={Wu, Jiayu and Zhang, Qixiang and Xu, Guoxi},
  journal={Technical report},
  year={2017},
  publisher={Stanford University}
}

@article{krizhevsky2009learning,
  title={Learning Multiple Layers of Features from Tiny Images},
  author={Krizhevsky, Alex},
  year={2009}
}

@inproceedings{he2016deep,
  title={Deep residual learning for image recognition},
  author={He, Kaiming and Zhang, Xiangyu and Ren, Shaoqing and Sun, Jian},
  booktitle={Proceedings of the IEEE conference on computer vision and pattern recognition},
  pages={770--778},
  year={2016}
}

@article{zagoruyko2016wide,
  title={Wide residual networks},
  author={Zagoruyko, Sergey and Komodakis, Nikos},
  journal={arXiv preprint arXiv:1605.07146},
  year={2016}
}

@article{simonyan2014very,
  title={Very deep convolutional networks for large-scale image recognition},
  author={Simonyan, Karen and Zisserman, Andrew},
  journal={arXiv preprint arXiv:1409.1556},
  year={2014}
}

@inproceedings{Morcos19OneTicket,
 author = {Morcos, Ari and Yu, Haonan and Paganini, Michela and Tian, Yuandong},
 booktitle = {NeurIPS},
 title = {One ticket to win them all: generalizing lottery ticket initializations across datasets and optimizers},
 year = {2019}
}

@inproceedings{evci2020rigging,
  title={Rigging the lottery: Making all tickets winners},
  author={Evci, Utku and Gale, Trevor and Menick, Jacob and Castro, Pablo Samuel and Elsen, Erich},
  booktitle={ICML},
  year={2020},
  organization={PMLR}
}

@inproceedings{frankle2020linear,
  title={Linear mode connectivity and the lottery ticket hypothesis},
  author={Frankle, Jonathan and Dziugaite, Gintare Karolina and Roy, Daniel and Carbin, Michael},
  booktitle={ICML},
  pages={3259--3269},
  year={2020},
  organization={PMLR}
}

@inproceedings{you2022supertickets,
  title={Supertickets: Drawing task-agnostic lottery tickets from supernets via jointly architecture searching and parameter pruning},
  author={You, Haoran and Li, Baopu and Sun, Zhanyi and Ouyang, Xu and Lin, Yingyan},
  booktitle={European Conference on Computer Vision},
  pages={674--690},
  year={2022},
  organization={Springer}
}

@inproceedings{ramanujan2020s,
  title={What's Hidden in a Randomly Weighted Neural Network?},
  author={Ramanujan, Vivek and Wortsman, Mitchell and Kembhavi, Aniruddha and Farhadi, Ali and Rastegari, Mohammad},
  booktitle={CVPR},
  pages={11893--11902},
  year={2020}
}

@inproceedings{malach2020proving,
  title={Proving the lottery ticket hypothesis: Pruning is all you need},
  author={Malach, Eran and Yehudai, Gilad and Shalev-Schwartz, Shai and Shamir, Ohad},
  booktitle={ICML},
  year={2020},
  organization={PMLR}
}

@article{pensia2020optimal,
  title={Optimal lottery tickets via subsetsum: Logarithmic over-parameterization is sufficient},
  author={Pensia, Ankit and Rajput, Shashank and Nagle, Alliot and Vishwakarma, Harit and Papailiopoulos, Dimitris},
  journal={arXiv:2006.07990},
  year={2020}
}

@article{you2019drawing,
  title={Drawing early-bird tickets: Towards more efficient training of deep networks},
  author={You, Haoran and Li, Chaojian and Xu, Pengfei and Fu, Yonggan and Wang, Yue and Chen, Xiaohan and Baraniuk, Richard G and Wang, Zhangyang and Lin, Yingyan},
  journal={arXiv:1909.11957},
  year={2019}
}

@inproceedings{
Lee2020A,
title={A Signal Propagation Perspective for Pruning Neural Networks at Initialization},
author={Namhoon Lee and Thalaiyasingam Ajanthan and Stephen Gould and Philip H. S. Torr},
booktitle={ICLR},
year={2020}
}

@InProceedings{Chen_2021_CVPR,
    author    = {Chen, Tianlong and Frankle, Jonathan and Chang, Shiyu and Liu, Sijia and Zhang, Yang and Carbin, Michael and Wang, Zhangyang},
    title     = {The Lottery Tickets Hypothesis for Supervised and Self-Supervised Pre-Training in Computer Vision Models},
    booktitle = {CVPR},
    year      = {2021},
}

@article{sabatelli2020transferability,
  title={On the transferability of winning tickets in non-natural image datasets},
  author={Sabatelli, Matthia and Kestemont, Mike and Geurts, Pierre},
  journal={arXiv:2005.05232},
  year={2020}
}

@article{hubens2021one,
  title={One-cycle pruning: Pruning convnets under a tight training budget},
  author={Hubens, Nathan and Mancas, Matei and Gosselin, Bernard and Preda, Marius and Zaharia, Titus},
  journal={arXiv:2107.02086},
  year={2021}
}

@inproceedings{zhang2021efficient,
  title={Efficient Lottery Ticket Finding: Less Data is More},
  author={Zhang, Zhenyu and Chen, Xuxi and Chen, Tianlong and Wang, Zhangyang},
  booktitle={ICML},
  pages={12380--12390},
  year={2021},
  organization={PMLR}
}

@article{wolfe2021provably,
  title={Provably Efficient Lottery Ticket Discovery},
  author={Wolfe, Cameron R and Wang, Qihan and Kim, Junhyung Lyle and Kyrillidis, Anastasios},
  journal={arXiv:2108.00259},
  year={2021}
}

@article{chijiwa2021pruning,
  title={Pruning Randomly Initialized Neural Networks with Iterative Randomization},
  author={Chijiwa, Daiki and Yamaguchi, Shin'ya and Ida, Yasutoshi and Umakoshi, Kenji and Inoue, Tomohiro},
  journal={arXiv:2106.09269},
  year={2021}
}

@article{su2020sanity,
  title={Sanity-checking pruning methods: Random tickets can win the jackpot},
  author={Su, Jingtong and Chen, Yihang and Cai, Tianle and Wu, Tianhao and Gao, Ruiqi and Wang, Liwei and Lee, Jason D},
  journal={arXiv:2009.11094},
  year={2020}
}

@inproceedings{patil2021phew,
  title={PHEW: Constructing sparse networks that learn fast and generalize well without training data},
  author={Patil, Shreyas Malakarjun and Dovrolis, Constantine},
  booktitle={International Conference on Machine Learning},
  pages={8432--8442},
  year={2021},
  organization={PMLR}
}

@article{gebhart2021unified,
  title={A unified paths perspective for pruning at initialization},
  author={Gebhart, Thomas and Saxena, Udit and Schrater, Paul},
  journal={arXiv preprint arXiv:2101.10552},
  year={2021}
}

@inproceedings{burkholz2021existence,
  title={On the Existence of Universal Lottery Tickets},
  author={Burkholz, Rebekka and Laha, Nilanjana and Mukherjee, Rajarshi and Gotovos, Alkis},
  booktitle={International Conference on Learning Representations},
  year={2021}
}

@article{zhang2021validating,
  title={Validating the lottery ticket hypothesis with inertial manifold theory},
  author={Zhang, Zeru and Jin, Jiayin and Zhang, Zijie and Zhou, Yang and Zhao, Xin and Ren, Jiaxiang and Liu, Ji and Wu, Lingfei and Jin, Ruoming and Dou, Dejing},
  journal={Advances in Neural Information Processing Systems},
  volume={34},
  pages={30196--30210},
  year={2021}
}

@inproceedings{liu2020finding,
  title={Finding trainable sparse networks through neural tangent transfer},
  author={Liu, Tianlin and Zenke, Friedemann},
  booktitle={International Conference on Machine Learning},
  pages={6336--6347},
  year={2020},
  organization={PMLR}
}

@inproceedings{arora2019fine,
  title={Fine-grained analysis of optimization and generalization for overparameterized two-layer neural networks},
  author={Arora, Sanjeev and Du, Simon and Hu, Wei and Li, Zhiyuan and Wang, Ruosong},
  booktitle={International Conference on Machine Learning},
  pages={322--332},
  year={2019},
  organization={PMLR}
}

@article{zheng20identifiability,
  title={Identifiability in Two-Layer Sparse Matrix Factorization},
  author={Zheng, L{\'e}on and Riccietti, Elisa and Gribonval, R{\'e}mi},
  year={2021},
  journal={complexity},
  volume={20},
  number={19},
  pages={28--29}
}

@article{orseau2020logarithmic,
  title={Logarithmic pruning is all you need},
  author={Orseau, Laurent and Hutter, Marcus and Rivasplata, Omar},
  journal={Advances in Neural Information Processing Systems},
  volume={33},
  pages={2925--2934},
  year={2020}
}

@article{fischer2021towards,
  title={Towards strong pruning for lottery tickets with non-zero biases},
  author={Fischer, Jonas and Burkholz, Rebekka},
  journal={arXiv preprint arXiv:2110.11150},
  year={2021}
}

@inproceedings{fischer2022plant,
  title={Plant'n'Seek: Can You Find the Winning Ticket?},
  author={Fischer, Jonas and Burkholz, Rebekka},
  booktitle={International Conference on Learning Representations},
  year={2022}
}

@inproceedings{arora2018optimization,
  title={On the optimization of deep networks: Implicit acceleration by overparameterization},
  author={Arora, Sanjeev and Cohen, Nadav and Hazan, Elad},
  booktitle={International Conference on Machine Learning},
  pages={244--253},
  year={2018},
  organization={PMLR}
}

@inproceedings{brutzkus2018sgd,
title={{SGD} Learns Over-parameterized Networks that Provably Generalize on Linearly Separable Data},
author={Alon Brutzkus and Amir Globerson and Eran Malach and Shai Shalev-Shwartz},
booktitle={International Conference on Learning Representations},
year={2018},
}

@InProceedings{shamir19Exponential,
  title = 	 {Exponential Convergence Time of Gradient Descent for    One-Dimensional Deep Linear Neural Networks},
  author =       {Shamir, Ohad},
  booktitle = 	 {Conference on Learning Theory},
  pages = 	 {2691--2713},
  year = 	 {2019},
  volume = 	 {99},
  month = 	 {25--28 Jun},
}

@inproceedings{arora2019implicit,
	title = {Implicit Regularization in Deep Matrix Factorization},
	author = {Arora, Sanjeev and Cohen, Nadav and Hu, Wei and Luo, Yuping},
	booktitle={Advances in Neural Information Processing Systems (NeurIPS)},
	pages = {7413--7424},
	year = {2019},
}

@inproceedings{li2020towards,
title={Towards Resolving the Implicit Bias of Gradient Descent for Matrix Factorization: Greedy Low-Rank Learning},
author={Zhiyuan Li and Yuping Luo and Kaifeng Lyu},
booktitle={International Conference on Learning Representations},
year={2021},
url={https://openreview.net/forum?id=AHOs7Sm5H7R}
}

@inproceedings{arora2018a,
title={A Convergence Analysis of Gradient Descent for Deep Linear Neural Networks},
author={Sanjeev Arora and Nadav Cohen and Noah Golowich and Wei Hu},
booktitle={International Conference on Learning Representations},
year={2019},
}

@InProceedings{Nacson22Implicit,
  title = 	 {Implicit Bias of the Step Size in Linear Diagonal Neural Networks},
  author =       {Nacson, Mor Shpigel and Ravichandran, Kavya and Srebro, Nathan and Soudry, Daniel},
  booktitle = 	 {International Conference on Machine Learning},
  pages = 	 {16270--16295},
  year = 	 {2022},
  volume = 	 {162},
  month = 	 {17--23 Jul},
}

@inproceedings{yun2021a,
title={A unifying view on implicit bias in training linear neural networks},
author={Chulhee Yun and Shankar Krishnan and Hossein Mobahi},
booktitle={International Conference on Learning Representations},
year={2021},
}

@article{van2020single,
  title={Single shot structured pruning before training},
  author={van Amersfoort, Joost and Alizadeh, Milad and Farquhar, Sebastian and Lane, Nicholas and Gal, Yarin},
  journal={arXiv preprint arXiv:2007.00389},
  year={2020}
}

@article{mozer1988skeletonization,
  title={Skeletonization: A technique for trimming the fat from a network via relevance assessment},
  author={Mozer, Michael C and Smolensky, Paul},
  journal={Advances in neural information processing systems},
  volume={1},
  year={1988}
}

@article{lecun1989optimal,
  title={Optimal brain damage},
  author={LeCun, Yann and Denker, John and Solla, Sara},
  journal={Advances in neural information processing systems},
  volume={2},
  year={1989}
}

@inproceedings{hassibi1993optimal,
  title={Optimal brain surgeon and general network pruning},
  author={Hassibi, Babak and Stork, David G and Wolff, Gregory J},
  booktitle={IEEE international conference on neural networks},
  pages={293--299},
  year={1993},
  organization={IEEE}
}

@article{han2015learning,
  title={Learning both weights and connections for efficient neural network},
  author={Han, Song and Pool, Jeff and Tran, John and Dally, William},
  journal={Advances in neural information processing systems},
  volume={28},
  year={2015}
}

@article{molchanov2016pruning,
  title={Pruning convolutional neural networks for resource efficient inference},
  author={Molchanov, Pavlo and Tyree, Stephen and Karras, Tero and Aila, Timo and Kautz, Jan},
  journal={arXiv preprint arXiv:1611.06440},
  year={2016}
}

@article{guo2016dynamic,
  title={Dynamic network surgery for efficient dnns},
  author={Guo, Yiwen and Yao, Anbang and Chen, Yurong},
  journal={Advances in neural information processing systems},
  volume={29},
  year={2016}
}

@article{chauvin1988back,
  title={A back-propagation algorithm with optimal use of hidden units},
  author={Chauvin, Yves},
  journal={Advances in neural information processing systems},
  volume={1},
  year={1988}
}

@inproceedings{carreira2018learning,
  title={“learning-compression” algorithms for neural net pruning},
  author={Carreira-Perpin{\'a}n, Miguel A and Idelbayev, Yerlan},
  booktitle={Proceedings of the IEEE Conference on Computer Vision and Pattern Recognition},
  pages={8532--8541},
  year={2018}
}

@inproceedings{
louizoslearning,
title={Learning Sparse Neural Networks through L\_0 Regularization},
author={Christos Louizos and Max Welling and Diederik P. Kingma},
booktitle={International Conference on Learning Representations},
year={2018},
url={https://openreview.net/forum?id=H1Y8hhg0b},
}

@inproceedings{
bellecdeep,
title={Deep Rewiring: Training very sparse deep networks},
author={Guillaume Bellec and David Kappel and Wolfgang Maass and Robert Legenstein},
booktitle={International Conference on Learning Representations},
year={2018},
url={https://openreview.net/forum?id=BJ_wN01C-},
}

@article{mocanu2018scalable,
  title={Scalable training of artificial neural networks with adaptive sparse connectivity inspired by network science},
  author={Mocanu, Decebal Constantin and Mocanu, Elena and Stone, Peter and Nguyen, Phuong H and Gibescu, Madeleine and Liotta, Antonio},
  journal={Nature communications},
  volume={9},
  number={1},
  pages={2383},
  year={2018},
  publisher={Nature Publishing Group UK London}
}

@inproceedings{mostafa2019parameter,
  title={Parameter efficient training of deep convolutional neural networks by dynamic sparse reparameterization},
  author={Mostafa, Hesham and Wang, Xin},
  booktitle={International Conference on Machine Learning},
  pages={4646--4655},
  year={2019},
  organization={PMLR}
}

@inproceedings{jaderbergspeeding,
	title = {Speeding up Convolutional Neural Networks with Low Rank Expansions},
	author = {Jaderberg, Max and Vedaldi, Andrea and Zisserman, Andrew},
	year = {2014},
	booktitle = {Proceedings of the British Machine Vision Conference},
	publisher = {BMVA Press},
}

@article{novikov2015tensorizing,
  title={Tensorizing neural networks},
  author={Novikov, Alexander and Podoprikhin, Dmitrii and Osokin, Anton and Vetrov, Dmitry P},
  journal={Advances in neural information processing systems},
  volume={28},
  year={2015}
}

@article{chen2021only,
  title={Only train once: A one-shot neural network training and pruning framework},
  author={Chen, Tianyi and Ji, Bo and Ding, Tianyu and Fang, Biyi and Wang, Guanyi and Zhu, Zhihui and Liang, Luming and Shi, Yixin and Yi, Sheng and Tu, Xiao},
  journal={Advances in Neural Information Processing Systems},
  volume={34},
  pages={19637--19651},
  year={2021}
}

@inproceedings{
chenotov2,
title={{OTO}v2: Automatic, Generic, User-Friendly},
author={Tianyi Chen and Luming Liang and Tianyu DING and Zhihui Zhu and Ilya Zharkov},
booktitle={The Eleventh International Conference on Learning Representations },
year={2023},
url={https://openreview.net/forum?id=7ynoX1ojPMt}
}

@inproceedings{mellor2021neural,
  title={Neural architecture search without training},
  author={Mellor, Joe and Turner, Jack and Storkey, Amos and Crowley, Elliot J},
  booktitle={International Conference on Machine Learning},
  pages={7588--7598},
  year={2021},
  organization={PMLR}
}

@article{white2021powerful,
  title={How powerful are performance predictors in neural architecture search?},
  author={White, Colin and Zela, Arber and Ru, Robin and Liu, Yang and Hutter, Frank},
  journal={Advances in Neural Information Processing Systems},
  volume={34},
  pages={28454--28469},
  year={2021}
}

@article{abdelfattah2021zero,
  title={Zero-cost proxies for lightweight nas},
  author={Abdelfattah, Mohamed S and Mehrotra, Abhinav and Dudziak, {\L}ukasz and Lane, Nicholas D},
  journal={arXiv preprint arXiv:2101.08134},
  year={2021}
}

@inproceedings{arora2018stronger,
  title={Stronger generalization bounds for deep nets via a compression approach},
  author={Arora, Sanjeev and Ge, Rong and Neyshabur, Behnam and Zhang, Yi},
  booktitle={International Conference on Machine Learning},
  pages={254--263},
  year={2018},
  organization={PMLR}
}

@article{neill2020overview,
  title={An overview of neural network compression},
  author={Neill, James O'},
  journal={arXiv preprint arXiv:2006.03669},
  year={2020}
}

@misc{open-lth, 
  author={ Jonathan Frankle }, 
  title={OpenLTH: A Framework for Lottery Tickets and Beyond}, 
  year={2020}, 
  url={https://github.com/facebookresearch/open_lth}, 
}

@article{janocha2017loss,
  title={On loss functions for deep neural networks in classification},
  author={Janocha, Katarzyna and Czarnecki, Wojciech Marian},
  journal={arXiv preprint arXiv:1702.05659},
  year={2017}
}

@inproceedings{ghosh2017robust,
  title={Robust loss functions under label noise for deep neural networks},
  author={Ghosh, Aritra and Kumar, Himanshu and Sastry, P Shanti},
  booktitle={Proceedings of the AAAI conference on artificial intelligence},
  year={2017}
}

@inproceedings{sreenivasan2022rare,
title={Rare Gems: Finding Lottery Tickets at Initialization},
author={Kartik Sreenivasan and Jy-yong Sohn and Liu Yang and Matthew Grinde and Alliot Nagle and Hongyi Wang and Eric Xing and Kangwook Lee and Dimitris Papailiopoulos},
booktitle={Advances in Neural Information Processing Systems},
editor={Alice H. Oh and Alekh Agarwal and Danielle Belgrave and Kyunghyun Cho},
year={2022},
}

@inproceedings{paul2022lottery,
  title={Lottery Tickets on a Data Diet: Finding Initializations with Sparse Trainable Networks},
  author={Paul, Mansheej and Larsen, Brett W and Ganguli, Surya and Frankle, Jonathan and Dziugaite, Gintare Karolina},
  booktitle={NeurIPS},
  year={2022}
}

@inproceedings{rachwan2022winning,
  title={Winning the lottery ahead of time: Efficient early network pruning},
  author={Rachwan, John and Z{\"u}gner, Daniel and Charpentier, Bertrand and Geisler, Simon and Ayle, Morgane and G{\"u}nnemann, Stephan},
  booktitle={International Conference on Machine Learning},
  pages={18293--18309},
  year={2022},
  organization={PMLR}
}

@inproceedings{shen2022prune,
  title={When to prune? a policy towards early structural pruning},
  author={Shen, Maying and Molchanov, Pavlo and Yin, Hongxu and Alvarez, Jose M},
  booktitle={Proceedings of the IEEE/CVF Conference on Computer Vision and Pattern Recognition},
  pages={12247--12256},
  year={2022}
}

@inproceedings{da2021proving,
  title={Proving the lottery ticket hypothesis for convolutional neural networks},
  author={da Cunha, Arthur and Natale, Emanuele and Viennot, Laurent},
  booktitle={International Conference on Learning Representations},
  year={2021}
}

@inproceedings{li2022revisiting,
  title={Revisiting random channel pruning for neural network compression},
  author={Li, Yawei and Adamczewski, Kamil and Li, Wen and Gu, Shuhang and Timofte, Radu and Van Gool, Luc},
  booktitle={Proceedings of the IEEE/CVF Conference on Computer Vision and Pattern Recognition},
  pages={191--201},
  year={2022}
}

@inproceedings{ganjdanesh2022interpretations,
  title={Interpretations steered network pruning via amortized inferred saliency maps},
  author={Ganjdanesh, Alireza and Gao, Shangqian and Huang, Heng},
  booktitle={European Conference on Computer Vision},
  pages={278--296},
  year={2022},
  organization={Springer}
}

@article{bartoldson2020generalization,
  title={The generalization-stability tradeoff in neural network pruning},
  author={Bartoldson, Brian and Morcos, Ari and Barbu, Adrian and Erlebacher, Gordon},
  journal={Advances in Neural Information Processing Systems},
  volume={33},
  pages={20852--20864},
  year={2020}
}

@inproceedings{wang2022ntk,
  title={NTK-SAP: Improving neural network pruning by aligning training dynamics},
  author={Wang, Yite and Li, Dawei and Sun, Ruoyu},
  booktitle={The Eleventh International Conference on Learning Representations},
  year={2022}
}

@article{back2023magnitude,
  title={Magnitude attention-based dynamic pruning},
  author={Back, Jihye and Ahn, Namhyuk and Kim, Jangho},
  journal={arXiv preprint arXiv:2306.05056},
  year={2023}
}

@article{pham2024towards,
  title={Towards Data-Agnostic Pruning At Initialization: What Makes a Good Sparse Mask?},
  author={Pham, Hoang and Liu, Shiwei and Xiang, Lichuan and Le, Dung and Wen, Hongkai and Tran-Thanh, Long and others},
  journal={Advances in Neural Information Processing Systems},
  volume={36},
  year={2024}
}

@inproceedings{he2015delving,
  title={Delving deep into rectifiers: Surpassing human-level performance on imagenet classification},
  author={He, Kaiming and Zhang, Xiangyu and Ren, Shaoqing and Sun, Jian},
  booktitle={Proceedings of the IEEE international conference on computer vision},
  pages={1026--1034},
  year={2015}
}

@inproceedings{glorot2010understanding,
  title={Understanding the difficulty of training deep feedforward neural networks},
  author={Glorot, Xavier and Bengio, Yoshua},
  booktitle={Proceedings of the thirteenth international conference on artificial intelligence and statistics},
  pages={249--256},
  year={2010},
  organization={JMLR Workshop and Conference Proceedings}
}

@inproceedings{deng2009imagenet,
  title={Imagenet: A large-scale hierarchical image database},
  author={Deng, Jia and Dong, Wei and Socher, Richard and Li, Li-Jia and Li, Kai and Fei-Fei, Li},
  booktitle={2009 IEEE conference on computer vision and pattern recognition},
  pages={248--255},
  year={2009},
  organization={Ieee}
}

\appendices

\begin{table*}[t]
\centering
\caption{Accuracy results with the original supervised SNIP and SNIP with sparse data and mask randomization. Above are results with CIFAR-10 and below are results with CIFAR-100.}
\label{tbl:sup_snip}
\begin{adjustbox}{width=0.75\textwidth}
        \begin{tabular}{c|ccc|ccc}
    \toprule
    Model                       & \multicolumn{3}{c|}{VGG-19}                       & \multicolumn{3}{c}{ResNet20}                     \\ 
Density                     & 10\%           & 5\%            & 2\%            & 10\%           & 5\%            & 2\%            \\\midrule
    SNIP (supervised)           & 92.62 ± 0.09 & 91.66 ± 0.25 & 90.92 ± 0.16 & 86.29 ± 0.92 & 82.55 ± 1.02 & 78.16 ± 0.8  \\
    SNIP + sparse data + rand. mask                     & \textbf{93.16 ± 0.36} & \textbf{92.79 ± 0.78} & \textbf{92.03 ± 0.91} & \textbf{87.02 ± 0.65} & \textbf{83.72 ± 0.73} & \textbf{79.39 ± 0.49} \\
    \midrule
    SNIP (supervised)           & 71.80 ±  0.66 & 71.07 ± 0.23 & \textbf{56.05 ± 13.97} & 67.03 ± 0.04 & 61.80 ± 0.38 & 49.42 ± 0.65 \\
    SNIP + sparse data + rand. mask & \textbf{72.49  ± 0.33} & \textbf{71.51 ±  0.45} & 53.05 ±15.38 & \textbf{67.52 ±  0.10} & \textbf{62.68 ±  0.32} & \textbf{51.76 ±  0.36}\\
    \bottomrule
    \end{tabular}
\end{adjustbox}
\end{table*}

\section{Results with Layerwise Sparsity}
\label{sec:layerwise_sparse}
We include results with SNIP \cite{lee2018snip} when the wanted sparsity is imposed in a layerwise manner, i.e. for a given density $s$\%, $s$\% of the weights remain in each layer.
Usually, pruning is performed globally and different layers remain with different sparsities. In that case, the requirement is just that $s$\% of all the weights will remain in the network. In the layerwise case, the requirement is imposed on each layer separately.
The results of pruning under this setup are detailed in \cref{tbl:snip_layerwise} and demonstrate the improvement of using mask randomization also in this case.
\begin{table*}[t]
\centering
\caption{Accuracy results with SNIP method without data when sparsity is imposed in a layerwise manner.}
\label{tbl:snip_layerwise}
\begin{adjustbox}{width=0.4\textwidth}
\changes{
        \begin{tabular}{c|ccc}
    \toprule
    Model                       & \multicolumn{3}{c}{VGG-19}                    \\ 
Density                     & 10\%           & 5\%            & 2\%           \\\midrule
    SNIP           & 85.12 ± 10.27 & 78.26 ± 4.85 & 64.64 ± 10.0  \\
    SNIP + rand. mask                     & \textbf{91.12 ± 0.32} & \textbf{89.64 ± 0.15} & \textbf{87.97 ± 0.12} \\
    \bottomrule
    \end{tabular}
    }
\end{adjustbox}
\end{table*}

\section{Useful Lemmas and Proofs}
First we present useful lemmas and proofs that are used to prove the lemmas and theorems in the paper.

We calculate the expectation of the mask with some unknown $w^\star$ vector.
\begin{lemma}\label{lemma:expectation}
For $m$ found with \cref{algo:sketch_mask} and $p^0$. Then for $ w^\star \in \mathbb{R}^d$ it holds,
$$ \mathbb{E}\left[ (X^T (w^\star \odot m))_{i} \right] = (X^T w^\star)_{i} $$.
\end{lemma}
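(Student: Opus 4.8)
The plan is to show that the estimator $X^T(w^\star \odot m)$ is unbiased by reducing the claim to the single statement $\mathbb{E}_m[m_j] = 1$ for every coordinate $j$, and then invoking linearity of expectation. Since $X$ and $w^\star$ are fixed and the only randomness is in the $s$ i.i.d. draws $i_1,\dots,i_s \sim p^0$ performed by \cref{algo:sketch_mask}, this reduces the whole computation to understanding the law of a single mask entry.

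First I would write each mask entry explicitly in terms of the draws. Every iteration $t$ of the loop adds $\frac{1}{s\,p^0_{i_t}}$ to coordinate $i_t$, so accumulating over the $s$ iterations gives
\begin{equation*}
    m_j = \sum_{t=1}^s \frac{1}{s\,p^0_j}\,\mathbf{1}\!\left[i_t = j\right].
\end{equation*}
Taking expectations over the i.i.d. draws and using $\Pr[i_t = j] = p^0_j$ for each $t$ yields
\begin{equation*}
    \mathbb{E}_m[m_j] = \sum_{t=1}^s \frac{1}{s\,p^0_j}\,\Pr[i_t = j] = \sum_{t=1}^s \frac{1}{s} = 1,
\end{equation*}
so each mask entry is an unbiased estimate of $1$ (equivalently, $N_j := s\,p^0_j\,m_j$ is $\mathrm{Binomial}(s,p^0_j)$ with mean $s\,p^0_j$).

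To finish, I would expand the $i$th coordinate of the masked features as a linear functional of $m$, namely $(X^T(w^\star \odot m))_i = \sum_{j=1}^d X_{ji}\,w^\star_j\,m_j$, and pass the expectation through the finite sum:
\begin{equation*}
    \mathbb{E}_m\!\left[(X^T(w^\star \odot m))_i\right] = \sum_{j=1}^d X_{ji}\,w^\star_j\,\mathbb{E}_m[m_j] = \sum_{j=1}^d X_{ji}\,w^\star_j = (X^T w^\star)_i.
\end{equation*}
There is no real obstacle here; the computation is immediate once $\mathbb{E}_m[m_j]=1$ is established. The only point requiring a small amount of care is the implicit support assumption $p^0_j > 0$ for every $j$: the division by $p^0_j$ in the algorithm is only well defined on the support of $p^0$, and a coordinate with $p^0_j = 0$ is never sampled (giving $m_j = 0$ rather than $\mathbb{E}[m_j]=1$). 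Under the optimal probability \eqref{eq:prob} this holds whenever every row norm $\norm{X_{(j)}}$ and weight $\abs{w^0_j}$ is nonzero, which is generic for the normally distributed data considered in the paper, so I would either state this nondegeneracy hypothesis explicitly or restrict the sums to the support of $p^0$.
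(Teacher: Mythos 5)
Your proof is correct and follows essentially the same route as the paper's: the paper fixes the coordinate $i$, defines per-draw variables $Y_t^{0\star} = X_{i_t i} w_{i_t}^\star/(s\,p^0_{i_t})$ whose sum over $t$ equals $(X^T(w^\star\odot m))_i$, and applies linearity of expectation --- which is exactly your computation with the order of the two sums (over draws $t$ and coordinates $j$) exchanged. Your observation that one needs $p^0_j>0$ for every $j$ (otherwise $\mathbb{E}[m_j]=0\ne 1$ on unsampled coordinates) is a valid nondegeneracy caveat that the paper leaves implicit.
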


\begin{proof}[\cref{lemma:expectation}]
Fix an index $i$ and denote the random variable $Y_t^{0\star} = \frac{X_{i_t i} w_{i_t}^\star}{sp_{i_t}^0}$ ($i_t$ is random and hence $p^0_{i_t}$ is random), it holds that $\sum_{t=1}^s Y_t^{0\star} = \sum_{t=1}^d X_{i_t i} w_{i_t}^\star \cdot \frac{1}{sp_{i_t}^0} =  (X^T(w^\star\odot m))_i$.
$$ \mathbb{E}_{i_t} \left[ Y_t^{0\star} \right] = \sum_{k=1}^d p_{k}^0 \frac{X_{ki} w_k^\star}{sp_k^0} = \frac{1}{s} (X_{(i)}^T w^\star) = \frac{1}{s} (X^T w^\star)_i .$$
Sum over $s$ random variables we have
$$ \mathbb{E}_{i_t}\left[ (X^T(w^\star \odot m))_i \right] = \sum_{t=1}^s \mathbb{E} \left[Y_t^{0\star}\right] =  (X^T w^\star)_i . $$
\end{proof}

\begin{lemma}\label{lemma:variance} For $m$ found with \cref{algo:sketch_mask} with $p^0$. Then for $ w_\star \in \mathbb{R}^d$ it holds,
\begin{align*}
{\rm Var}\left[ \left( X^T(w^\star \odot m) \right)_i \right] = \frac{1}{s} \sum_{k=1}^d \frac{(X_{ki})^2 (w_k^\star)^2}{p_k^0} - \frac{1}{s} \left(X^T w^\star\right)_i^2
\end{align*}
\end{lemma}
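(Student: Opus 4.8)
The plan is to reuse the per-draw decomposition already set up in the proof of \cref{lemma:expectation}. Fixing an index $i$, I would write the $i$th entry of the masked product as a sum of $s$ contributions, $(X^T(w^\star \odot m))_i = \sum_{t=1}^s Y_t^{0\star}$ with $Y_t^{0\star} = \frac{X_{i_t i} w_{i_t}^\star}{s p_{i_t}^0}$ and $i_t \sim p^0$. The crucial structural fact is that \cref{algo:sketch_mask} draws the indices $i_1,\dots,i_s$ i.i.d.\ (sampling with return), so the variables $Y_1^{0\star},\dots,Y_s^{0\star}$ are independent and identically distributed.

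First I would invoke this independence to turn the variance of a sum into a sum of variances, $\mathrm{Var}\big[(X^T(w^\star \odot m))_i\big] = s\,\mathrm{Var}[Y_1^{0\star}]$, which reduces the problem to computing a single term $\mathrm{Var}[Y_1^{0\star}] = \mathbb{E}[(Y_1^{0\star})^2] - (\mathbb{E}[Y_1^{0\star}])^2$. The second moment is a direct average over the sampling distribution, $\mathbb{E}[(Y_1^{0\star})^2] = \sum_{k=1}^d p_k^0 \frac{(X_{ki})^2 (w_k^\star)^2}{s^2 (p_k^0)^2} = \frac{1}{s^2}\sum_{k=1}^d \frac{(X_{ki})^2(w_k^\star)^2}{p_k^0}$, where one factor of $p_k^0$ cancels against the squared importance weight. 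For the mean term I would simply quote \cref{lemma:expectation}, which gives $\mathbb{E}[Y_1^{0\star}] = \frac{1}{s}(X^T w^\star)_i$, hence $(\mathbb{E}[Y_1^{0\star}])^2 = \frac{1}{s^2}(X^T w^\star)_i^2$.

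Combining these and multiplying by $s$ yields exactly the claimed identity $\mathrm{Var}\big[(X^T(w^\star \odot m))_i\big] = \frac{1}{s}\sum_{k=1}^d \frac{(X_{ki})^2 (w_k^\star)^2}{p_k^0} - \frac{1}{s}(X^T w^\star)_i^2$. There is no genuinely hard step here: the computation is a textbook variance of an importance-sampling estimator. The only two points that require care are the independence justification, since the variance-of-a-sum identity would fail if the draws were correlated, and the bookkeeping of the $p_k^0$ weights, whose cancellation is precisely what makes the estimator's variance take this clean closed form.
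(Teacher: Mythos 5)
Your proof is correct and follows essentially the same route as the paper: decompose $(X^T(w^\star\odot m))_i$ into the i.i.d.\ importance-sampling terms $Y_t^{0\star}$, sum the variances, compute the second moment by averaging over $p^0$, and subtract the squared mean from \cref{lemma:expectation}. In fact your write-up is slightly cleaner, since you correctly cite \cref{lemma:expectation} for the mean term where the paper's own proof mistakenly refers back to \cref{lemma:variance} itself.
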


\begin{proof}[\cref{lemma:variance}]
According to the variance of independent variables
\begin{eqnarray*}
{\rm Var}\left[ \left( X^T(w^\star \odot m) \right)_i \right] &=&  \sum_{t=1}^s {\rm Var}\left[ Y_t^{0\star}\right] \\ &=& \sum_{t=1}^s \mathbb{E}[(Y_t^{0\star})^2] - \mathbb{E}[Y_t^{0\star}]^2.
\end{eqnarray*}
Then we focus on $\mathbb{E}[(Y_t^{0\star})^2]$ calculation,
\begin{align*}
    \mathbb{E}[(Y_t^{0\star})^2] = \sum_{k=1}^d \frac{(X_{ki})^2 (w_k^\star)^2}{s^2 p_k^0}.
\end{align*}
Then the use of \cref{lemma:variance} concludes the proof.
\end{proof}

\subsection{Proof of \cref{lemma:error_x}}
The following lemma supports \cref{thm:2weight_rand_data} and \cref{thm:NTK}. This lemma establish the variance of each masked entry when the mask is found with $w^0$ and $\Tilde{X}$ but applied on other input data and weight vector.
\begin{replemma}{lemma:error_x}
    Suppose $X,\Tilde{X}\in\mathbb{R}^{d\times n}$, $w^0,w^\star\in\mathbb{R}^d$ and  $s\in \mathbb{Z}^+$ then when using \cref{algo:sketch_mask} with $p^0$ and $\Tilde{X}$ for $m$  the error can be bounded
\begin{eqnarray*}
    && \hspace{-0.3in}\mathbb{E}_{m}\left[\norm{X^T w^\star - X^T (w^\star \odot m)}^2\right] \\ &&= \frac{1}{s} \sum_{k=1}^d \frac{1}{p_k^0}\norm {X_{(k)}}^2 {w_k^\star}^2 - \frac{1}{s} \norm{X^T w^\star}^2 \\
    &&\leq \frac{1}{s} \sum_{k=1}^d \frac{\sum_{j=1}^d \norm{\Tilde{X}_{(j)}}|w^0_j|}{\norm{\Tilde{X}_{(k)}}|w^0_k|} \norm{X_{(k)}}^2 {w^\star_k}^2.
\end{eqnarray*}
\end{replemma}

\begin{proof}
$m$ is drawn i.i.d.
    \begin{eqnarray*}
&& \hspace{-0.3in} \mathbb{E}_{m}\left[ \norm{X^T w^\star - X^T (w^\star \odot m)}^2 \right] \\ && = \sum_{i=1}^n \mathbb{E}_m\left[ (X^T w^\star - X^T (w^\star \odot m))_i^2 \right] \\
&&= \sum_{i=1}^n (X^Tw^\star)_i^2 - 2(X^Tw^\star)_i \mathbb{E}_m \left[ (X^T (w^\star \odot m))_i\right] \\ && + \mathbb{E}_m \left[ (X^T(w^\star \odot m))_i^2  \right]\\
&&\stackrel{{\cref{lemma:expectation}}}{=} \sum_{i=1}^n {\rm Var}_m\left[ (X^T (w^\star \odot m))_i \right]\\
&&\stackrel{{\cref{lemma:variance}}}{=} \frac{1}{s} \sum_{k=1}^d \frac{1}{p_k^0}\left(\sum_{i=1}^n {X_{ki}}^2\right) (w_k^\star)^2 - \frac{1}{s} \norm{X^T w^\star}^2 \\
&&\leq \frac{1}{s} \sum_{k=1}^d \frac{1}{p_k^0}\left(\sum_{i=1}^n {X_{ki}}^2\right) (w_k^\star)^2 \\
&&\stackrel{{p^0\text{ definition}}}{=} \frac{1}{s} \sum_{k=1}^d \frac{\sum_{j=1}^d \norm{\Tilde{X}_{(j)}}|w^0_j|}{\norm{\Tilde{X}_{(k)}}|w^0_k|} \norm{X_{(k)}}^2 {w^\star_k}^2.
\end{eqnarray*}  
\end{proof}

Now, we can repeat the lemmas and theorem from the main paper and present their proofs.

\subsection{Proof of \cref{lemma:sketching_rand_data}}

\begin{replemma}{lemma:sketching_rand_data}
Suppose $X\in\mathbb{R}^{d\times n} \sim \frac{1}{\sqrt{n}} \mathcal{N}(0, I)$, $w^0\in\mathbb{R}^d$ and  $s\in \mathbb{Z}^+$ then when using \cref{algo:sketch_mask} with $p^0$ for $m$  the error can be bounded
\begin{align*}
    \mathbb{E}_{X}\left[\norm{X^T w^0 - X^T (w^0 \odot m)}^2\right] \leq \frac{1}{s} \norm{w^0}^2
\end{align*}
\end{replemma}

\begin{proof}
  \begin{align*}
    & \mathbb{E}_{X}\left[\norm{X^T w^0 - X^T (w^0 \odot m)}^2\right] \\ &= \mathbb{E}_{X}\left[ \mathbb{E}_{m|X}\left[\norm{X^T w^0 - X^T (w^0 \odot m)}^2\right] \right] ]\\
    &\stackrel{\cref{lemma:sketching}}{=} \mathbb{E}_{X}\left[\frac{1}{s} \left(\sum_{k=1}^d \norm{X_{(k)}} \abs{w_k^0}\right)^2 - \frac{1}{s}\norm{X^T w^0} \right] \\
    &\leq \mathbb{E}_{X}\left[\frac{1}{s} \left(\sum_{k=1}^d \norm{X_{(k)}} \abs{w_k^0}\right)^2 \right] \\
    &\leq \frac{1}{s} \mathbb{E}_{X}\left[ \sum_{k=1}^d \norm{X_{(k)}}^2 (w_k^0)^2 \right] \\
    &= \frac{1}{s}  \sum_{k=1}^d \mathbb{E}_{X}\left[\norm{X_{(k)}}^2 \right] (w_k^0)^2 = \frac{n}{sn} \norm{w^0}^2
\end{align*}  
The last inequality is due to the fact that $w$ is deterministic with respect to the expectation and $X_{(k)}$ are independent. 

\end{proof}

\subsection{Proof of \cref{thm:2weight_rand_data}}
\begin{reptheorem}{thm:2weight_rand_data}
Suppose $X\in\mathbb{R}^{d\times n} \sim \frac{1}{\sqrt{n}} \mathcal{N}(0, I)$, $w^0,w^\star\in\mathbb{R}^d$ and  $s\in \mathbb{Z}^+$. Let $c = \max_j \abs{\frac{w^\star_j}{w^0_j}}$. When using \cref{algo:sketch_mask} with $p^0$ for $m$  the error can be bounded as follows
\begin{align*}
    &\mathbb{E}_{X}\left[\norm{X^T w^\star - X^T (w^\star \odot m)}^2\right]  \leq \frac{c^2}{s} \norm{w^0}_1^2 
\end{align*}
\end{reptheorem}

\begin{proof}
We calculate the expectation over $X$. Since $X$ rows are i.i.d. we replace $\mathbb{E}\norm{X_{(k)}}$ in $\mathbb{E}\norm{X_{(.)}}$ since for all $k$ the norm is the same.
\begin{align*}
    &\mathbb{E}_{X}\left[\norm{X^T w^\star - X^T (w^\star \odot m)}^2\right]  
    \\
    & \stackrel{{\cref{lemma:error_x}}}{\le} \mathbb{E}_X \left[\frac{1}{s} \sum_{k=1}^d \frac{\sum_{j=1}^d \norm{X_{(j)}}|w^0_j|}{|w^0_k|} \norm{X_{(k)}} {w^\star_k}^2 \right]\\
    &= \frac{1}{s} \sum_{k=1}^d {w^\star_k}^2 \Big(\mathbb{E}_X\left[\norm{ X_{(k)}}^2\right] \\ & ~~~~~~~~~~~~~+ \frac{1}{\abs{w_k^0}}\sum_{j\ne k}|w^0_j| \mathbb{E}_X\left[\norm{X_{(k)}}
    \right]\mathbb{E}_X\left[\norm{X_{(j)}}\right] \Big) \\
    &= \frac{1}{s}  \sum_{k=1}^d \mathbb{E}_X\left[\norm{ X_{(.)}}^2\right] {w^\star_k}^2 +  \mathbb{E}_X\left[\norm{X_{(.)}}
    \right]^2  \frac{{w^\star_k}^2}{\abs{w_k^0}}  \sum_{j\ne k} |w^0_j|.
\end{align*}
Note that according to the distribution of $X$,
$$ \mathbb{E}_X\left[\norm{ X_{(.)}}^2\right] = \mathbb{E}\left[ \sum_{i=1}^n \frac{1}{n} x_i^2\right] = 1, $$
$$ \mathbb{E}_X\left[\norm{ X_{(.)}}\right]^2 = \frac{1}{n} \mathbb{E} \left[ \sqrt{\sum_{i=1}^n x_i^2} \right]^2 = \frac{2}{n} \frac{\Gamma((n+1)/2)^2}{\Gamma(n/2)^2} \leq 1,$$
where $\Gamma(\cdot)$ is the gamma function.

Calculate the result as a function of $w^0$ and the ratio, $c$:
\begin{align*}
        & \frac{1}{s}  \sum_{k=1}^d  {w^\star_k}^2 +    \frac{{w^\star_k}^2}{\abs{w_k^0}}  \sum_{j\ne k} |w^0_j| = \frac{1}{s}  \sum_{k=1}^d  \frac{{w^\star_k}^2}{\abs{w_k^0}}  \sum_{j =1}^d |w^0_j|\\
        &= \frac{1}{s} \norm{w^0}_1 \sum_{k=1}^d \frac{{w_k^\star}^2}{\abs{w_k^0}} \\
        &\leq \frac{1}{s} \norm{w^0}_1 \sum_{k=1}^d \frac{{w_k^0}^2c^2}{\abs{w_k^0}} \leq \frac{c^2}{s} \norm{w^0}_1^2.
\end{align*}
Note that in the proof we assumed that the weights at initialization were non-zero, which is a very common practice in neural networks weight initialization. 

\end{proof}

\subsection{Proof of \cref{lemma:unimask_rand_data}}
\begin{replemma}{lemma:unimask_rand_data}
Let $X\in\mathbb{R}^{d\times n} \sim \frac{1}{\sqrt{n}} \mathcal{N}(0, I)$, $w^0\in\mathbb{R}^d$, $s\in \mathbb{Z}^+$ and $c = \max_j \abs{\frac{w^\star_j}{w^0_j}}$. Then when choosing $m$ uniformly at random (i.e., using \cref{algo:sketch_mask} with a uniform distribution) the error obeys
\begin{align*}
    & \mathbb{E}_{X}\left[\norm{X^T w^\star - X^T (w^\star \odot m)}^2\right] 
    \leq \frac{d}{s} \norm{w^\star}^2 \leq \frac{dc^2}{s} \norm{w^0}^2
\end{align*}

\end{replemma}
\begin{proof}
    This proof is in the same form as \cref{thm:2weight_rand_data} and we start by establishing the equivalence to \cref{lemma:expectation,lemma:variance}.
    In order to maintain the estimation of the features, when $m_i \ne 0$ it holds that $m_i = \frac{d}{s}$ (as stated in \cref{algo:sketch_mask}, $m_i = \frac{1}{sp_i}$).
    $$\mathbb{E}_m\left[ (X^T (w^\star\odot m)_i \right] =  \sum_{t=1}^s \frac{1}{d} (X^T w^\star)_i \frac{d}{s} = (X^T w^\star)_i$$

    Next we analyse the variance ${\rm Var}_m\left[ (X^T(w^\star \odot m))_i\right]$. Fix an index $i$ and let $Y_t^{Uni\star} = \frac{X_{i_t i} w_{i_t}^\star}{sp_{i_t}^{Uni}} = \frac{dX_{i_t i} w_{i_t}^\star}{s}$.
    We look at
    \begin{eqnarray*}
    {\rm Var}\left[ \left( X^T(w^\star \odot m) \right)_i \right] &=&  \sum_{t=1}^s {\rm Var}\left[ Y_t^{Uni\star}\right] \\ &=& \sum_{t=1}^s \mathbb{E}[(Y_t^{Uni\star})^2] - \mathbb{E}[Y_t^{Uni\star}]^2 .
    \end{eqnarray*}
    
    $$ \mathbb{E}_m \left[ {Y_t^{Uni \star}}^2\right] = \sum_{k=1}^d \frac{X_{ki}^2 (w_k^\star)^2}{s} \frac{d}{s}$$
    Overall we can conclude that 
    $${\rm Var}_m\left[ (X^T(w^\star \odot m))_i\right] = \frac{d}{s} \sum_{k=1}^d X_{ki}^2 (w_k^\star)^2 -\frac{1}{s} (X^T w^\star)_i^2 .$$

    Finally we calculate the expectation over $X$ and we have
     $$\mathbb{E}_{X,m}\left[ \norm{X^T w^\star - X^T(w^\star \odot m)} \right] \leq \frac{d}{s} \norm{w^\star}^2 \leq \frac{dc^2}{s} \norm{w^0}^2. $$
\end{proof}

\subsection{Proof of \cref{thm:NTK}}
\begin{reptheorem}{thm:NTK}
    Under [1-4] assumptions, $\delta_0 > 0$ and $\eta_0\leq \eta_{critical}$.
    Let $m\in\mathbb{R}^d$ be a $s$-dense mask found with \cref{algo:sketch_mask} with $p$ according to $f_0^{lin}(\mathcal{X})$ and $\theta_0$ (\cref{eq:prob}).
    Then there exist $R_0 >0$, $K>1$ such that for every $n\geq N$ the following holds with probability $>1-\delta_0$ over random initialization when applying GD with $\eta_0$
    \begin{align*}
        \mathbb{E}&_m\left[\norm{f_t^{lin}(\mathcal{X}) - \nabla_{\theta_t} f_t(\mathcal{X}) (\theta_t \odot m)}^2\right] \\
        &\leq \frac{1}{s} K^3 \norm{\theta_0}_1 F(J(\theta_0)) \cdot  \\ & \quad \left(\norm{\theta_0}_1 + F(\theta_0) \frac{9K^4R_0^2}{\lambda_{\min}^2} + 6\sqrt{d} \frac{K^3R_0}{\lambda_{\min}}\right),
    \end{align*}
    where $F(A) = \sum_{i=1}^d \frac{1}{\norm{A^{(i)}}}$.
\end{reptheorem}

\begin{proof}
    We start with computing the expected error of approximation of the features at time $t$ using \cref{lemma:error_x}. Note that we use the jacobian at initialization $J(\theta_0)$ and the initial parameters $\theta_0$ to find the mask $m$.
    \begin{align*}
    &\mathbb{E}_m\left[\norm{f_t^{lin}(\mathcal{X}) - \nabla_{\theta_t} f_t(\mathcal{X}) (\theta_t \odot m)}\right] \\ &=
        \mathbb{E}_m \left[ \norm{J(\theta_t)\theta_t - J(\theta_t)(\theta_t \odot m)}^2 \right] \\
        &\leq \frac{1}{s} \sum_{k=1}^d \frac{\norm{J(\theta_t)^{(k)}}^2}{\norm{J(\theta_0)^{(k)}}} \frac{\theta_{tk}^2}{\abs{\theta_{0k}}} \sum_{j=1}^d \norm{J(\theta_0)^{(j)}}\abs{\theta_{0j}} \\
        &\leq \frac{1}{s}\left(\sum_{j=1}^d \norm{J(\theta_0)^{(j)}}\abs{\theta_{0j}}\right) \left( \sum_{k=1}^d \frac{\norm{J(\theta_t)^{(k)}}^2}{\norm{J(\theta_0)^{(k)}}}\right)
        \left( \sum_{k=1}^d \frac{\theta_{tk}^2}{\abs{\theta_{0k}}} \right).
    \end{align*}

    Next we bound each term.
    \begin{align*}
        \sum_{k=1}^d \frac{\theta_{tk}^2}{\abs{\theta_{0k}}} &= \sum_{k=1}^d \frac{(\theta_{tk} - \theta_{0k})^2}{\abs{\theta_{0k}}} +2\theta_{tk}sign(\theta_{0k}) - \abs{\theta_{0k}} \\
        &\leq \left( \sum_{k=1}^d \frac{1}{\abs{\theta_{0k}}} \right) \norm{\theta_t - \theta_0}^2 + 2\norm{\theta_t}_1 - \norm{\theta_0}_1 \\
        &\leq \left( \sum_{k=1}^d \frac{1}{\abs{\theta_{0k}}} \right) \norm{\theta_t - \theta_0}^2 + 2\norm{\theta_t - \theta_0}_1 + \norm{\theta_0}_1,
    \end{align*}
    where the last transition is done we the reverse triangle inequality and subtraction and addition of $\norm{\theta_0}_1$.
    Hence when using the bound in Theorem G.4 and $\norm{a}_1 \leq \sqrt{d}\norm{a}_2$ we have,
    \begin{align*}
        \sum_{k=1}^d \frac{\theta_{tk}^2}{\abs{\theta_{0k}}} &\leq \norm{\theta_0}_1 + \left( \sum_{k=1}^d \frac{1}{\abs{\theta_{0k}}} \right) \left(\frac{3KR_0}{\lambda_{\min}}\right)^2 + 6\sqrt{d} \frac{KR_0}{\lambda_{\min}}.
    \end{align*}
We bound the next term,
\begin{align*}
    &\left(\sum_{j=1}^d \norm{J(\theta_0)^{(j)}}\abs{\theta_{0j}}\right) \left( \sum_{k=1}^d \frac{\norm{J(\theta_t)^{(k)}}^2}{\norm{J(\theta_0)^{(k)}}}\right) \\ &\leq \left(\sum_{j=1}^d \norm{J(\theta_0)^{(j)}}\abs{\theta_{0j}}\right) \left( \sum_{k=1}^d \frac{1}{\norm{J(\theta_0)^{(k)}}}\right) \cdot \\ & ~~~~~~~~~~~~~~~~~~~~~~~~~~~~~~~~~~~~~~~~~~~~~~~~~~~~~~~~~~~~\left( \sum_{k=1}^d \norm{J(\theta_t)^{(k)}}^2\right) \\
    &= \left(\sum_{j=1}^d \norm{J(\theta_0)^{(j)}}\abs{\theta_{0j}}\right) \left( \sum_{k=1}^d \frac{1}{\norm{J(\theta_0)^{(k)}}}\right) \norm{J(\theta_t)}_F^2 \\
    &\leq \left(\sum_{j=1}^d \norm{J(\theta_0)^{(j)}}\abs{\theta_{0j}}\right) \left( \sum_{k=1}^d \frac{1}{\norm{J(\theta_0)^{(k)}}}\right) K^2
\end{align*}
We can also bound,
\begin{align*}
    \left(\sum_{j=1}^d \norm{J(\theta_0)^{(j)}}\abs{\theta_{0j}}\right) &\leq \sum_{j=1}^d \norm{J(\theta_0)}_F\abs{\theta_{0j}} \leq K \norm{\theta_0}_1.
\end{align*}
    Overall we can bound the error with the state of the model at initialization
    \begin{align*}
        &\mathbb{E}_m \left[ \norm{J(\theta_t)\theta_t - J(\theta_t)(\theta_t \odot m)}^2 \right] \leq 
        \\
        &\leq \frac{1}{s} K^3 \norm{\theta_0}_1 \left( \sum_{k=1}^d \frac{1}{\norm{J(\theta_0)^{(k)}}}\right)  \cdot \\ & ~~~~~~~~~~~~~~~~~~ \left(\norm{\theta_0}_1 + \left( \sum_{k=1}^d \frac{1}{\abs{\theta_{0k}}} \right)\left(\frac{3K^2R_0}{\lambda_{\min}}\right)^2 + 6\sqrt{d} \frac{K^3R_0}{\lambda_{\min}}\right)
        \\
        &\leq \frac{1}{s} K^3 \norm{\theta_0}_1 F(J(\theta_0))   \left(\norm{\theta_0}_1 + F(\theta_0) \frac{9K^4R_0^2}{\lambda_{\min}^2} + 6\sqrt{d} \frac{K^3R_0}{\lambda_{\min}}\right),
    \end{align*}
    where $F(A) = \sum_{i=1}^d \frac{1}{\norm{A^{(i)}}}$.
    
\end{proof}

\end{document}